\documentclass[11pt]{article}

\usepackage[final]{acl}

\usepackage{times}
\usepackage{latexsym}

\usepackage[T1]{fontenc}

\usepackage[utf8]{inputenc}

\usepackage{microtype}

\usepackage{inconsolata}

\usepackage{graphicx}
\usepackage{amssymb}
\usepackage{amsmath}
\usepackage{enumitem}
\usepackage{algorithm}
\usepackage{algpseudocode}
\usepackage{comment}
\usepackage{lipsum}
\usepackage{amsthm}
\usepackage{subcaption}
\usepackage{booktabs}

\newtheorem{proposition}{Proposition}

\title{MARS-RA: Rank Aggregation for Credit Assignment via Multimodal Comparisons in Embodied Multi-Agent Cooperation}

\author{
 \textbf{Dawei Wang\textsuperscript{1}},
  \textbf{Di Zhao\textsuperscript{2}},
  \textbf{Xinyuan Liu\textsuperscript{1}},
 \textbf{Marci Chi Ma\textsuperscript{1}},
 \\
  \textbf{Xiaoyang Liu\textsuperscript{1}},
 \textbf{Chengming Zhou\textsuperscript{1}},
  \textbf{Gary Ushaw\textsuperscript{1}},
 \textbf{Richard Davison\textsuperscript{1}},
\\
  \textsuperscript{1}Newcastle University, United Kingdom
  \\
 \textsuperscript{2}University of Auckland, New Zealand
}

\begin{document}
\maketitle
\begin{abstract}
Credit assignment is a fundamental challenge in cooperative multi-agent reinforcement learning, particularly in embodied AI settings characterized by limited and delayed feedback as well as dynamically changing numbers of active agents. We propose MARS-RA, a framework that reformulates credit assignment as a rank aggregation problem using contribution-based pairwise comparisons among agents generated by large multimodal models. This shift from absolute to relative estimation ensures robustness against noise and dynamic agent participation, converting comparison results into contribution scores for potential-based reward shaping. We provide theoretical justification for the convergence and robustness of the proposed framework, and show that Shapley values can be used as an interpretive reference. Experimental results on challenging tasks of different types indicate that MARS-RA can guide agents toward effective cooperation.

\end{abstract}

\section{Introduction}
\label{sec:intro}

\begin{figure*}
    \centering
    \includegraphics[width=1.0\textwidth]{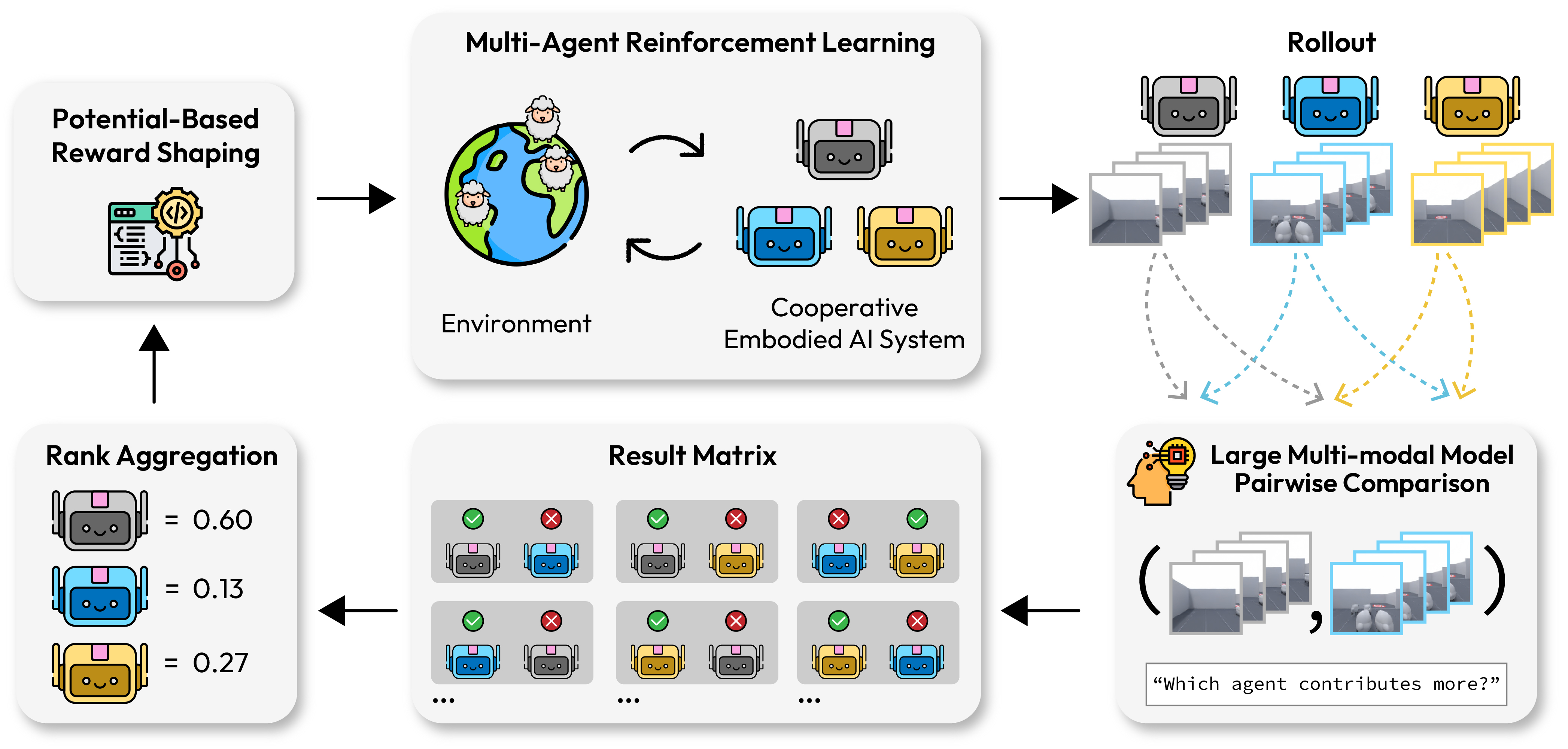}
    \caption{The MARS-RA framework for MARL credit assignment in cooperative embodied AI systems. Agents’ visual observations are processed by an LMM to conduct pairwise comparisons based on team contribution; the results are consolidated via rank aggregation to obtain contribution-aligned relevance scores, which are incorporated into MARL training through potential-based reward shaping.}
    \label{fig:fig1}
\end{figure*}

In Embodied Artificial Intelligence (AI) \cite{turing2021computing, clark1998being}, multi-agent reinforcement learning (MARL) has emerged as the canonical solution for enabling multiple agents to cooperate in dynamic environments \cite{zhang2021multi}. Credit assignment is a fundamental challenge in cooperative MARL, where an agent struggles to disentangle its own contribution to the global reward signal from the simultaneous actions of other agents in the environment \cite{minsky2007steps}. The credit assignment problem can lead to suboptimal policies and unpredictable behaviors \cite{wong2023deep}, which in turn result in concrete negative outcomes for cooperative embodied agents, such as wasted resources \cite{patel2023dream} or increased collision risks \cite{serra2023learning}. Therefore, addressing the credit assignment problem is a prerequisite for ensuring the safe and effective operation of cooperative embodied AI system.

However, the credit assignment problem in embodied AI is fundamentally exacerbated by two inherent characteristics: (1) Embodied agents perceive the world through partial and noisy multimodal egocentric sensors (e.g., RGB, thermal, LiDAR) \cite{feng2025embodied}, and they often operate in tasks with sparse rewards and long-horizon dependencies, resulting in limited and delayed feedback; (2) Embodied AI systems are open, where agents may leave or enter mid-task due to hardware failures or task demands, violating the standard fixed-agent-set assumption \cite{tang2023roma, abadi2025challenges}. Existing credit assignment methods, such as VDN \cite{sunehag2017value}, QMIX \cite{rashid2020monotonic}, and COMA \cite{foerster2017counterfactual}, already suffer performance degradation under insufficient feedback conditions alone \cite{papoudakis2020benchmarking}. Consequently, the credit assignment problem in cooperative embodied AI systems remains an unresolved challenge.

To address this challenge, we propose a novel approach that reformulates the credit assignment problem through the lens of rank aggregation \cite{debreu1960individual}, a probabilistic framework for inferring latent scores of $n$ items from multiple partially ordered samples (e.g., pairwise comparisons) \cite{ma2022tale}. This reformulation stems from viewing an agent’s credit as its latent contribution score, allowing us to estimate credits via rank aggregation over pairwise comparisons of \textit{“which agent contributes more”}. In complex multi-agent embodied cooperation tasks, precisely quantifying an individual agent's contribution is not only mathematically ill-posed \cite{bakushinsky2012ill} but also practically tenuous. The key advantage of this formulation is transforming absolute score estimation into relative pairwise comparisons, which makes credit assignment more tractable \cite{christiano2017deep}. Furthermore, this formulation intrinsically aligns with the inherent characteristics of embodied AI: (1) pairwise comparisons naturally accommodate dynamically changing numbers of active agents; (2) the inherent robustness of rank aggregation allows it to tolerate comparison noise arising either from biases in the pairwise-comparison generator itself or from embodied AI’s partial observability and long-horizon dependencies; and (3) the per-agent contribution scores produced by rank aggregation can serve as dense rewards that complement the environment’s sparse reward signals.



We propose \textbf{MARS-RA} (\textbf{M}ulti-\textbf{A}gent \textbf{R}eward \textbf{S}ystems via \textbf{R}ank \textbf{A}ggregation), a credit assignment framework that performs pairwise comparisons of agents’ contributions toward the team objective and employs rank aggregation to derive contribution-aligned relevance scores for each agent. These scores are then transformed into a potential function \cite{ng1999policy}, which is integrated into the MARL training loop to provide dense reward signals for learning. We utilize Large Multimodal Models (LMMs) \cite{team2023gemini, yang2025qwen3, liu2023visual, dong2025interleaved, zhao2026unlearning} as the automatic generator for the required pairwise comparisons. This automated pipeline is capable of meeting the computational demands of MARL training by eliminating the need for costly human annotation. LMMs possess advanced multimodal perception \cite{huang2023language, li2024lmeye, li2026videothinker, li2025videopro}, are capable of directly ingesting high-dimensional visual observations to perform spatio-temporal inference \cite{wang2026towermind, rocamonde2023vision, zhao2024symmetric}, and excel at making pairwise comparisons \cite{shi2024judging, di2025balancing}. The overall architecture of the MARS-RA is shown in Figure~\ref{fig:fig1}. Since existing embodied AI benchmarks lack settings where the number of active agents can change dynamically, we instantiate a challenging embodied multi-agent task suite in ManiSkill3 \cite{tao2024maniskill3} and construct \textbf{MARS-Bench}, where 2 to 4 decentralized embodied agents collaborate to complete three representative tasks: \textit{Pass Gate}, \textit{Herd Sheep}, and \textit{Collect Ball}. Agents in these tasks operate under partial egocentric observations and sparse rewards, and may enter or exit during task execution, closely mirroring the characteristics of embodied AI scenarios.

The main contributions of this work are as follows: (1) We pioneer a reformulation of the credit assignment problem in MARL as a rank aggregation problem. This formulation is well-suited to embodied AI settings and naturally integrates with LMM-based automated comparisons and potential-based reward shaping. (2) We propose MARS-RA, which outperforms strong baselines in our experiments. Further analysis shows that its performance improves with higher accuracy and increased number of LMM-based pairwise comparisons, suggesting that MARS-RA can benefit from continued advances in LMM reasoning capability. (3) We build MARS-Bench, a benchmark that captures key characteristics and challenges of embodied AI settings, particularly by introducing openness in the number of active agents, encouraging the community to move beyond static team assumptions and develop more resilient algorithms.


\section{Related Work}
\label{sec:related_work}

\noindent \textbf{Multi-Agent Cooperation in Embodied AI.} Embodied AI \cite{brooks1991new} refers to intelligent agents equipped with physical bodies or virtual embodiments that can perceive, act, and adapt through continuous interaction with their environment. Embodied AI tasks often involve cooperation among multiple agents, which can exhibit capabilities beyond those of individual agents. Existing benchmarks for multi-agent embodied AI, such as MQE \cite{xiong2024mqe} and TDW-Cook \cite{zhang2024combo}, typically exhibit characteristic challenges like partial observability and sparse rewards. However, none of the currently available benchmarks support a dynamic number of active agents, representing a notable gap in the field. 





\noindent \textbf{Credit Assignment.} Credit assignment is a fundamental challenge in MARL. To address this challenge, various approaches have been proposed to improve credit assignment: value-decomposition methods such as VDN \cite{sunehag2017value} and QMIX \cite{rashid2020monotonic} factorize a centralized Q-function into individual agent Q-functions; COMA \cite{foerster2017counterfactual} applies a counterfactual advantage baseline to estimate the individual contributions of each agent; SQDDPG \cite{wang2020shapley} utilizes Shapley values as a principled mechanism for credit assignment. Large language models (LLMs) have opened new directions for credit assignment through language-based reasoning and coordination, with works such as LLM-MCA \cite{nagpal2025leveraging}, LCA \cite{lin2025speaking}, SAMA \cite{li2025multi}, and LERO \cite{wei2025lero} leveraging LLMs for credit evaluation, task decomposition, and hybrid reward design. However, these methods often rely on simplified environments, handcrafted rules, and strong assumptions, limiting their applicability to cooperative embodied AI scenarios.

\noindent \textbf{Rank Aggregation.} Rank aggregation is an important task across a wide range of disciplines, including sports \cite{herbrich2006trueskill}, psychology \cite{critchlow1991probability}, and bioinformatics \cite{kolde2012robust}. In essence, rank aggregation methods treat pairwise comparisons as a means of estimating the latent ‘quality’ or ‘score’ of the compared items, such as the popularity of books or the skill levels of athletes. In the machine learning domain, rank aggregation has been applied in various areas. In this work, we pioneer the introduction of rank aggregation in MARL, reinterpreting the credit assignment problem through the lens of rank aggregation.




\section{Preliminaries}
\label{sec:preliminaries}
Embodied multi-agent cooperation can be modeled within the framework of an open decentralized partially observable Markov decision process (Open Dec-POMDP) \cite{cohen2017open}, defined by $( \mathcal{N}, \mathcal{I}, \mathcal{S}, \mathcal{A}, \mathcal{O}, \phi, r, \gamma, T )$, where: $\mathcal{N} = \{1, 2, \ldots, n\}$ is a finite population of $n$ agents. $\mathcal{I} \subseteq \mathcal{P}(\mathcal{N})$ is a finite set of coalitions formed from the agent population $\mathcal{N}$. The coalition $I^t \in \mathcal{I}$ at time $t$ is designated as the \textit{operating coalition}, and an agent $i$ is defined as \textit{active} at time $t$ if $i \in I^t$. $\mathcal{S}$ is the finite set of states. $\mathcal{A} = \{\mathcal{A}^i \mid i \in N\}$ represents the set of action spaces, where $\mathcal{A}^i$ is the finite set of actions for agent $i$, $\mathcal{A}^I = \times_{i \in I} \mathcal{A}^i$ denotes the set of joint actions $a^I$ available to coalition $I$, $\boldsymbol{a}^I = (a^i)_{i \in I}$. $\mathcal{O} = \{\mathcal{O}^i \mid i \in \mathcal{N}\}$ represents the set of observation spaces, where $\mathcal{O}^i$ is the finite set of observations for agent $i$, $\mathcal{O}^I = \times_{i \in I} \mathcal{O}^i$ denotes the set of joint observations $\boldsymbol{o}^I$ available to coalition $I$, $\boldsymbol{o}^I = (o^i)_{i \in I}$. $\phi$ is the dynamics model, where $\phi(s', \boldsymbol{o}^{I'}, I' \mid s, I, \boldsymbol{a}^I) \in [0, 1]$ specifies the probability of transitioning to state $s'$ and coalition configuration $I'$, while receiving joint observation $\boldsymbol{o}^{I'}$, given that coalition $I$ took joint action $\boldsymbol{a}^I$ in state $s$. $r: \mathcal{S} \times \mathcal{A}^I \to \mathbb{R}$ is the reward function, defining the reward $r(s, \boldsymbol{a}^I)$ received after coalition $I$ executes joint action $\boldsymbol{a}^I$ in state $s$. $\gamma \in [0, 1]$ is the discount factor. $T$ is the planning horizon. Our objective is to learn the joint policy $\pi$ that maximizes the expected discounted sum of rewards, where the rewards $(R_t)_{t \in {0, 1, \dots, T-1}}$ are random variables distributed according to the reward function $r$: $\underset{\pi}{\operatorname{arg\,max}} \; \mathbb{E} \{ \sum_{t=0}^{T-1} \gamma^t R_t \mid \pi \}$.

\begin{figure}
    \centering
    \includegraphics[width=\columnwidth]{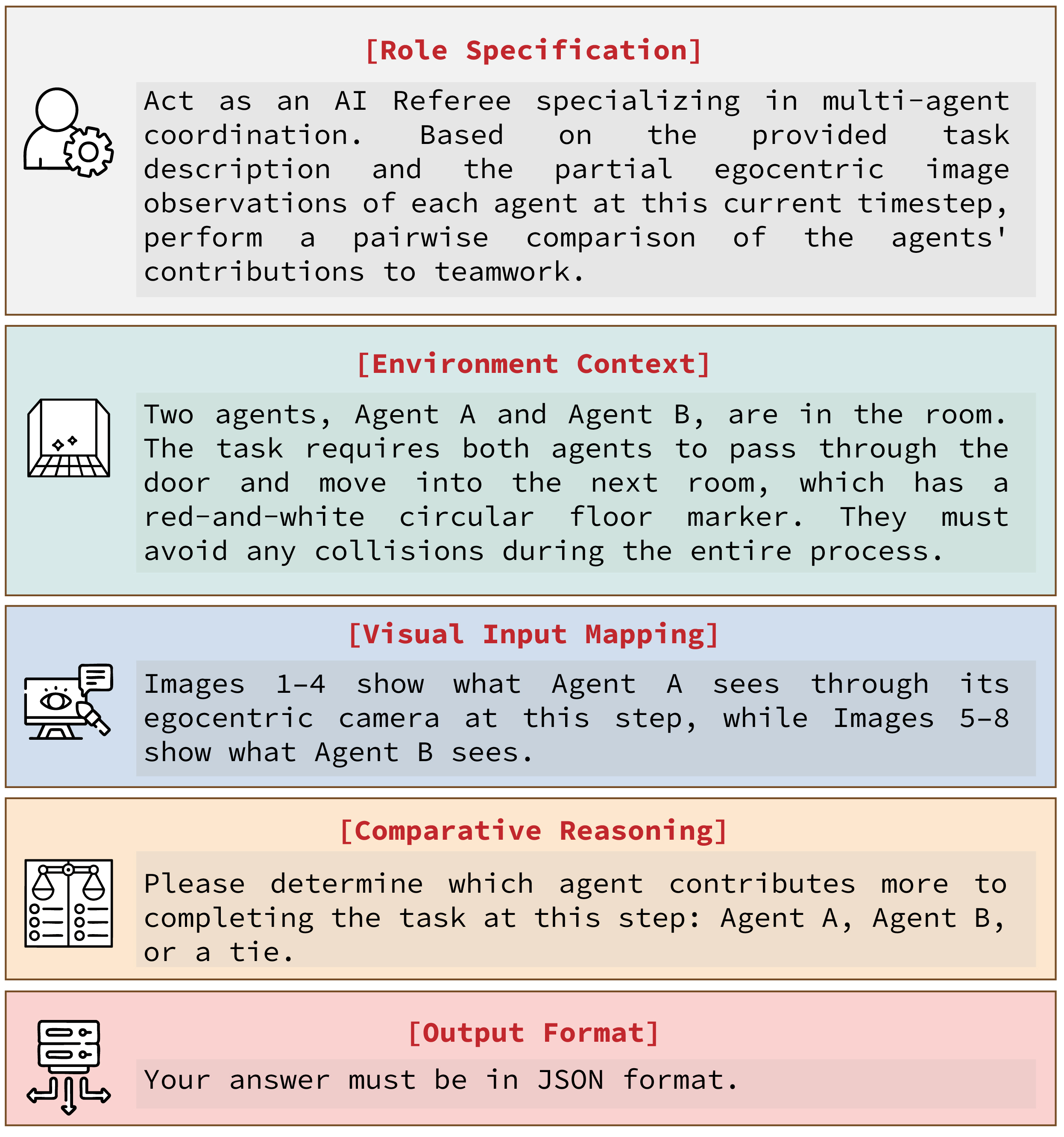}
    \caption{Example of the prompt used in the Pass Gate task. The prompt consists of five components: role specification, environmental context, visual input mapping, comparative reasoning, and output format.}
    \label{fig:prompt}
\end{figure}

\section{Assumptions}
\label{sec:assumptions}
The following assumptions are imposed on the LMMs employed in this work: (1) We assume that the LMMs are trained on diverse text and image corpora, providing a reasonable basis for generalization across embodied AI scenarios and tasks. (2) We assume that LMMs can process multiple images concurrently and follow textual prompts to perform reasoning. (3) MARS-RA is intended for tasks whose states can be evaluated using image-based observations and textual information.

\section{Method}
\label{sec:method}
As shown in Figure~\ref{fig:fig1}, MARS-RA is a framework that integrates rank aggregation and LMMs to enable automatic credit assignment in MARL training. It consists of three steps: (1) LMM-based pairwise comparison, simplifying the contribution evaluation task into relative judgments, thereby providing the essential supervisory signal required for subsequent rank aggregation; (2) rank aggregation, serving to transform the discrete comparison matrix into continuous contribution credits, providing a fine-grained numeric basis for downstream MARL; and (3) potential function, transforming static contribution credits into dynamic shaping rewards, seamlessly embedding the ranking outcomes as dense rewards into the training loop.

\subsection{LMM-based Pairwise Comparison.} 
Pairwise comparison is inherently independent of population size, making it an ideal mechanism for extracting reliable information in scenarios with a dynamic number of active agents. LMMs are especially well suited for performing these comparisons in MARS-RA: they can reason directly over high-dimensional visual observations that are difficult to evaluate using low-level states or hand-crafted rules, and they enable a fully automated assessment pipeline that naturally scales with compute-intensive MARL training.

During training, at each step $t$, we construct a pairwise comparison matrix $\boldsymbol{M}_t \in \mathbb{R}^{n \times n}$ initialized to zero, where $n$ represents the total number of agents. We then query the LMM to perform pairwise comparisons only among the active agents in the set $I^t$. For every ordered pair $(i,j)$ where $i \neq j$ and $i, j \in I^t$, the respective egocentric image observations $o_t^i$ and $o_t^j$ are provided as inputs to the LMM. Thus, each unordered agent pair corresponds to two ordered comparisons, which helps mitigate position bias \cite{tian2025identifying} in LMM-based judgments. The resulting preferences (win for $i$, win for $j$, or tie) are assigned to the corresponding entries in $\boldsymbol{M}_t$, while entries involving inactive agents remain zero:
\begin{equation}
    (\boldsymbol{M}_t)_{i,j} = f_{\text{LMM}}(o_t^i, o_t^j, \Theta),
\end{equation}

\noindent where $\Theta$ denotes the textual prompt instructing the model to judge which agent contributes more to the team. $\boldsymbol{M}_t$ is structured as a comparison outcome matrix (2d array) where each entry $(\boldsymbol{M}_t)_{i,j}$ represents the number of times agent $i$ is preferred over agent $j$. To handle ties, we assign a score of $0.5$ to both $(\boldsymbol{M}_t)_{i,j}$ and $(\boldsymbol{M}_t)_{j,i}$ for each neutral judgment. It is important to note that, since pairwise comparison requires at least two entities, when the number of active agents satisfies \( |I^{t}| < 2 \), we simply skip the query process.

Figure~\ref{fig:prompt} illustrates the structure of the prompt, which consists of five components: (1) role specification: assign the LMM the persona of a referee; (2) environmental context: supply the LMM with information about the task and its rules; (3) visual input mapping: indicate to the LMM the agent identity associated with each image; (4) comparative reasoning: instruct the LMM to determine which agent made a greater contribution to the team's success; (5) output format: enforce that the LMM outputs a format that is easy to parse (e.g., JSON).


\subsection{Rank Aggregation.} 
We adopt the Bradley\textendash Terry model \cite{bradley1952rank} to synthesize local pairwise comparisons into global contribution scores. This probabilistic framework serves two critical purposes: First, it performs Maximum Likelihood Estimation (MLE) \cite{bishop2006pattern} aggregation under a scalar latent-score model, which can mitigate noise or inconsistency by best-fit ranking. Second, it transforms ordinal preferences into continuous cardinal values, capturing not just the rank order but the magnitude of capability differences between agents. This fine-grained quantification is essential for generating smooth and informative potential-based rewards. 

We fit the Bradley\textendash Terry model to infer a latent contribution score 
\(
\boldsymbol{c}_t \in \mathbb{R}^{|I^t|}
\)
for all active agents at step \(t\). The probability that agent \(i\) is preferred over agent \(j\) is modeled as:
\begin{equation}
\mathbb{P}(i \succ j)
= \frac{\exp(c_t^{i})}{\exp(c_t^{i}) + \exp(c_t^{j})}.
\end{equation} 
Given $\boldsymbol{M}_t$, we estimate \(\boldsymbol{c}_t\) by minimizing the negative log-likelihood:
\begin{equation}
    \hat{\boldsymbol{c}_t} = \mathop{\arg\min}_{\boldsymbol{c}_t} \sum_{i, j} \left[- (\boldsymbol{M}_t)_{i,j} \log \mathbb{P}(i \succ j)\right]
\end{equation}
The resulting estimate $\hat{\boldsymbol{c}_t}$ serves as the credit assigned to each agent at step $t$.

\subsection{Potential Function.} 
To incorporate the derived contribution scores without biasing the optimization objective, we adopt potential-based reward shaping (PBRS) \cite{ng1999policy}. This is crucial because the contribution scores evolve during training. Prior work shows that PBRS preserves policy invariance and Nash equilibria even under dynamic shaping \cite{devlin2012dynamic}, and can safely embed arbitrary rewards by expressing them as dynamic potentials \cite{harutyunyan2015expressing}.


At this point, the reward function becomes $r: \mathcal{S} \times \mathcal{A}^I \times \mathcal{S} \to \mathbb{R}$, with \( r(s,     \boldsymbol{a}^I, s') \) denoting the reward received when coalition \( I \) executes the joint action \( \boldsymbol{a}^I \) in state \( s \) and transitions to the next state \( s' \). And the shaping reward function in this mechanism is defined as $F(s, t, s', t') = \gamma \, \psi(s', t') - \psi(s, t)$, where \( t \) denotes the time at which the agent was in the previous state \( s \), and \( t' \) is the time when it reaches the current state \( s' \), with \( t < t' \). We define our potential function as:
\begin{equation}
    \psi(s_t, t) = 
    \begin{cases} 
        0, & \text{if } s_t \text{ is terminal}, \\
        \operatorname{softmax}(\hat{\boldsymbol{c}_t}), & \text{otherwise}.
    \end{cases}
\end{equation}
This assigns a zero potential to the terminal state \cite{wierstra2008episodic}, and applies softmax normalization at all non-terminal steps to map the contribution scores into a normalized representation of relative strength. We employ the following shaping reward under this mechanism: $F(s_t, t, s_{t+1}, t+1) = \gamma  \psi(s_{t+1}, t+1) -  \psi(s_t, t)$. The final shaped reward used for training is:
\begin{equation}
\tilde{\boldsymbol{r}}_t = r(s_t, \boldsymbol{a}_t^I, s_{t+1}) + \rho F(s_t, t, s_{t+1}, t+1),
\end{equation}
where $\rho$ is a scalar weighting factor that modulates the contribution of the potential-based reward relative to the environmental reward.

\section{Theoretical Properties}
\label{sec:theory}


In this section, we provide the theoretical justification for MARS-RA. We analyze two key properties: (1) \textbf{Convergence and Robustness}, showing that our rank aggregation framework can recover agents’ underlying contribution scores and effectively mitigate noise and inaccuracies in LMM-generated pairwise comparisons, including those induced by LMM hallucinations as well as by partial observability and long-horizon dependencies in embodied AI scenarios; and (2) \textbf{Shapley Values as an Interpretive Reference}, demonstrating the consistency of our derived scores with Shapley values \cite{shapley1953value} under ideal conditions. We follow the notation established in the preceding two sections. The detailed proof is provided in the Appendix~\ref{appendix:proofs}.

\subsection{Convergence and Robustness Analysis}
\label{subsec:robustness}

We assume that the LMM possesses a latent ground-truth preference vector $\mathbf{c}_t^* \in \mathbb{R}^{|I^t|}$ that represents the relative contributions of the active agents at step $t$. The LMM does not output $\mathbf{c}_t^*$ directly; instead, it acts as a comparator that follows the Bradley–Terry model and uses Maximum Likelihood Estimation (MLE) to obtain an estimator $\hat{\mathbf{c}_t}$ that best explains the observed pairwise comparisons. We now characterize the error bound of this estimator.

\begin{proposition}[Convergence and Robustness]
\label{pro:robustness}
Suppose the comparison graph formed by $\boldsymbol{M}_t$ is connected. Let $\hat{\mathbf{c}_t}$ be the MLE estimate derived from $K$ pairwise comparisons. With probability at least $1 - n^{-2}$, the estimation error relative to the latent preference $\mathbf{c}_t^*$ satisfies:
\begin{equation}
    \frac{1}{\sqrt{n}} \|\hat{\mathbf{c}_t} - \mathbf{c}_t^*\|_2 \le C_0 \sqrt{\frac{n \log n}{K}}
\end{equation}
where $C_0$ is a constant depending on the graph topology and $\|\cdot\|_2$ denotes the Euclidean norm.
\end{proposition}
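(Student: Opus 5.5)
The plan is the standard Bradley--Terry MLE analysis: strong convexity of the negative log-likelihood, governed by the Laplacian of the comparison graph, combined with a concentration bound on the score (gradient) at the truth. This is the route of Negahban--Oh--Shah and Shah et al.

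\textbf{Setup.} Write the objective in the statement's estimator as $\ell(\mathbf{c}) = -\frac{1}{K}\sum_{k=1}^K \log \sigma(y_k \langle \mathbf{c}, e_{i_k}-e_{j_k}\rangle)$, where $\sigma$ is the logistic function. Here each of the $K$ comparisons $(i_k,j_k)$ is an entry counted in $\boldsymbol{M}_t$, with ties split as half-observations. Since $\ell$ is invariant under $\mathbf{c}\mapsto \mathbf{c}+\alpha\mathbf{1}$, we restrict to $\{\mathbf{c}:\mathbf{1}^\top\mathbf{c}=0\}$ and assume $\mathbf{c}_t^*$ is normalized there. We also assume a bounded dynamic range $\|\mathbf{c}_t^*\|_\infty\le B$, optimizing over that box if needed. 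This assumption is implicit in the statement and feeds into $C_0$. Connectedness of the graph from $\boldsymbol{M}_t$ is exactly what makes the Laplacian $L=\frac1K\sum_k (e_{i_k}-e_{j_k})(e_{i_k}-e_{j_k})^\top$ positive definite on $\mathbf{1}^\perp$, with $\lambda_2(L)>0$.

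\textbf{Step 1 (curvature).} The Hessian is $\nabla^2\ell(\mathbf{c})=\frac1K\sum_k \sigma'(\cdot)(e_{i_k}-e_{j_k})(e_{i_k}-e_{j_k})^\top$. On the box, $\sigma'\ge \zeta:=e^{2B}/(1+e^{2B})^2$, so $\nabla^2\ell \succeq \zeta L$. Convexity and optimality of $\hat{\mathbf{c}}_t$ then give $\ell(\hat{\mathbf{c}}_t)\le\ell(\mathbf{c}^*_t)$. A second-order Taylor expansion yields
\[
\tfrac{\zeta}{2}\,\lambda_2(L)\,\|\Delta\|_2^2 \;\le\; \tfrac{\zeta}{2}\,\Delta^\top L\Delta \;\le\; \|\nabla\ell(\mathbf{c}_t^*)\|_{L^{\dagger}}\,\|\Delta\|_L ,
\]
where $\Delta=\hat{\mathbf{c}}_t-\mathbf{c}_t^*$. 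In Euclidean form this reads $\|\Delta\|_2\le \frac{2}{\zeta\lambda_2(L)}\|\nabla\ell(\mathbf{c}_t^*)\|_2$.

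\textbf{Step 2 (gradient concentration).} We have $\nabla\ell(\mathbf{c}^*_t)=-\frac1K\sum_k \xi_k (e_{i_k}-e_{j_k})$. The $\xi_k$ are independent, mean-zero under the Bradley--Terry model, and bounded by $1$; this is where LMM noise enters, since any comparator following the model produces such centered errors. By a vector Bernstein or Hoeffding bound (or a matrix Bernstein bound on $\sum_k\xi_k X_k$), with probability at least $1-n^{-2}$,
\[
\|\nabla\ell(\mathbf{c}^*_t)\|_2 \le C\sqrt{\frac{d_{\max}\log n}{K}} \le C\sqrt{\frac{\log n}{K}},
\]
where $d_{\max}$ is the normalized maximal degree. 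Choosing the deviation parameter proportional to $\log n$ gives the stated probability after a union bound over the $n$ coordinates.

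\textbf{Step 3 (combine).} This gives $\|\Delta\|_2\le \frac{2C}{\zeta\lambda_2(L)}\sqrt{\log n/K}$. The normalized Laplacian of a connected graph on $n$ nodes with comparable edge weights has $\lambda_2(L)\gtrsim 1/n$ in the worst case; this is where the graph topology enters $C_0$. Hence $\|\Delta\|_2\lesssim n\sqrt{\log n/K}$. Dividing by $\sqrt n$ gives the claimed $\frac{1}{\sqrt n}\|\Delta\|_2\le C_0\sqrt{n\log n/K}$, with $C_0$ absorbing $\zeta$, $B$ and the spectral gap constant $n\lambda_2(L)$.

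The main obstacle is Step 1's reliance on the bounded box. Without $\|\mathbf{c}\|_\infty\le B$, the curvature $\sigma'$ can vanish and the MLE may not even exist; connectedness alone does not rule out that a subset always wins. I would handle this by first imposing the box constraint, which only helps since $\mathbf{c}^*_t$ is feasible. Alternatively, one can argue that with strong connectivity in both directions the unconstrained MLE lies in the box with high probability. Whichever route is taken, the dependence on $B$ and on $n\lambda_2(L)$ has to be folded honestly into $C_0$.
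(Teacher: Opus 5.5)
Your proposal is correct and follows essentially the paper's route: strong convexity of the Bradley--Terry loss on $\mathbf{1}^\perp$ from the Laplacian's algebraic connectivity, Hoeffding plus a union bound on $\nabla\ell(\mathbf{c}_t^*)$ at level $n^{-2}$, then $\|\Delta\|_2\le\|\nabla\ell(\mathbf{c}_t^*)\|_2/\mu$. You are also more careful than the paper (your box constraint and exact expansion from $\ell(\hat{\mathbf{c}}_t)\le\ell(\mathbf{c}_t^*)$ replace its linearization ``$\approx$'' and unproved local strong convexity, and your explicit factor $1/\lambda_2(L)$ is what actually produces the $\sqrt{n\log n/K}$ in the statement, an $n$-dependence the paper hides inside $\mu\propto K$). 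One slip: since $\operatorname{tr}L=2$ forces $\lambda_2(L)\le 2/(n-1)$, the bound $\lambda_2(L)\gtrsim 1/n$ is the best case (the complete graph, which is what MARS-RA queries), not the worst case (a path gives order $n^{-3}$), so your Step~3 goes through only because you absorb $n\lambda_2(L)$ into the topology-dependent $C_0$.
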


\begin{proof}[Sketch of Proof]
This result relies on the analysis of regularized MLE for pairwise comparisons \cite{negahban2012iterative}. The Hessian of the negative log-likelihood behaves similarly to the Laplacian of the comparison graph \cite{shah2016estimation}. Under the assumption of algebraic connectivity ($\lambda_2 > 0$), the objective function satisfies Restricted Strong Convexity \cite{negahban2012unified}. Combined with the concentration of measure for the gradient (bounded by $O(\sqrt{K \log n})$ via Hoeffding's inequality), standard convex optimization analysis yields the convergence rate of $O(1/\sqrt{K})$.
\end{proof}

Proposition \ref{pro:robustness} provides a mathematical justification for the convergence and robustness of our framework. It guarantees that as long as we perform sufficient pairwise comparisons ($K$), the aggregated scores $\hat{\mathbf{c}_t}$ will converge to the LMM's stable latent preference $\mathbf{c}_t^*$, thereby filtering out the influence of noise and inaccuracies.


\subsection{Shapley Values as an Interpretive Reference}
\label{subsec:consistency}

Having established that our method robustly recovers the LMM's latent preference $\mathbf{c}_t^*$, a natural question arises: \textit{What does $\mathbf{c}_t^*$ represent conceptually?} We posit that a rational LMM, when provided with sufficient context, judges agents based on their marginal contributions.

\begin{proposition}[Interpretability via Shapley Values]
If the LMM acts as a rational probabilistic comparator where the latent preference is determined by the agents' true Shapley values (i.e., $\mathbf{c}_t^* = \mathbf{v}_t^*$), then the scores $\hat{\mathbf{c}_t}$ derived by MARS-RA are consistent estimators of the true Shapley values (up to a translation constant).
\end{proposition}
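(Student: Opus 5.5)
The plan is to reduce the statement to Proposition~\ref{pro:robustness}, after fixing the non-identifiability of the Bradley--Terry model. First I will make the hypothesis precise. We assume the LMM's comparisons $(\boldsymbol{M}_t)_{i,j}$ are independent draws with $\mathbb{P}(i \succ j) = \exp(v^{*i}_t)/(\exp(v^{*i}_t)+\exp(v^{*j}_t))$, where $\mathbf{v}_t^*$ is the vector of true Shapley values of the active coalition $I^t$ at step $t$. The hypothesis $\mathbf{c}_t^* = \mathbf{v}_t^*$ means exactly that the data-generating model of Proposition~\ref{pro:robustness} is in force with this parameter. If the LMM uses a known monotone link (for instance a temperature $\beta$, giving $\mathbf{c}_t^* = \beta \mathbf{v}_t^*$), the same argument recovers $\mathbf{v}_t^*$ up to that scale. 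I will state this as a remark rather than as part of the claim.

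Second, I will handle identifiability. The likelihood depends on $\mathbf{c}_t$ only through the differences $c^i_t - c^j_t$. So the Bradley--Terry MLE is defined only modulo the all-ones direction $\mathbf{1}$, and "consistency up to a translation constant" is the natural target. I will fix a normalization, say $\mathbf{1}^\top \hat{\mathbf{c}_t} = \mathbf{1}^\top \mathbf{v}_t^*$ or simply $\mathbf{1}^\top \hat{\mathbf{c}_t}=0$, and compare against $\mathbf{v}_t^* - \bar v_t \mathbf{1}$. Connectedness of the comparison graph gives $\lambda_2 > 0$ for its Laplacian, so the log-likelihood is strictly concave on $\mathbf{1}^\perp$. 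Hence the normalized MLE is unique, and so is the projection of $\mathbf{c}_t^*$ onto $\mathbf{1}^\perp$. This step must be checked against the proof of Proposition~\ref{pro:robustness}, which implicitly compares $\hat{\mathbf{c}_t}$ and $\mathbf{c}_t^*$ under the same normalization.

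Third, I will invoke Proposition~\ref{pro:robustness} with $\mathbf{c}_t^* = \mathbf{v}_t^*$. This gives
\begin{equation*}
\bigl\|\hat{\mathbf{c}_t} - \mathbf{v}_t^* - \kappa_t \mathbf{1}\bigr\|_2 \le C_0\, n \sqrt{\frac{\log n}{K}}
\end{equation*}
with probability at least $1-n^{-2}$, where $\kappa_t$ is the translation fixed by the normalization. For fixed $n$ and graph topology, letting $K \to \infty$ makes the right-hand side vanish. For any $\epsilon>0$ the failure probability of the event $\{\|\hat{\mathbf{c}_t} - \mathbf{v}_t^* - \kappa_t\mathbf{1}\|_2 > \epsilon\}$ is then eventually at most $n^{-2}$. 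This is the point I expect to be the main obstacle. A fixed $1-n^{-2}$ bound does not by itself give convergence in probability as $K\to\infty$ with $n$ fixed. To close it, I would go back to the Hoeffding step in the sketch of Proposition~\ref{pro:robustness}, where the gradient is bounded by $O(\sqrt{K\log n})$. I would rerun it with a free confidence level $\delta$, so the gradient is bounded by $O(\sqrt{K\log(n/\delta)})$, which yields an error of order $\sqrt{n\log(n/\delta)/K}$. Taking $\delta = \delta_K \to 0$ slowly, for example $\delta_K = 1/K$, gives an error that tends to $0$ with probability tending to $1$. This is consistency (up to translation) in the usual sense, and almost sure convergence follows along summable $\delta_K$ by Borel--Cantelli.

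Finally, I will note what the result does and does not say. Softmax is invariant to adding a constant to all coordinates, so the translation ambiguity is harmless for the potential $\psi$. The shaped rewards are therefore consistent estimates of the potential built from $\operatorname{softmax}(\mathbf{v}_t^*)$. The substantive assumption, that the LMM's latent preferences equal the Shapley values, is taken as given and not proved.
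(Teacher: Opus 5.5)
Your proposal is correct at the paper's level of rigor, but it takes a different route. The paper never invokes Proposition~\ref{pro:robustness}. It writes the stationarity conditions $\sum_{j\neq i}\bigl(n_{ij}-N_{ij}\,\sigma(\hat c_{t,i}-\hat c_{t,j})\bigr)=0$, divides by $K$, and uses the law of large numbers $n_{ij}/N_{ij}\to\sigma(c^*_{t,i}-c^*_{t,j})$ with sampling proportions $\pi_{ij}$. It then identifies the solution of the limiting system $\sum_{j\neq i}\pi_{ij}\bigl(\sigma(c^*_{t,i}-c^*_{t,j})-\sigma(\hat c_{t,i}-\hat c_{t,j})\bigr)=0$ as $\mathbf{c}_t^*+C\mathbf{1}$, using monotonicity of $\sigma$ and connectivity. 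This is a classical LLN-plus-identifiability argument that needs no concentration inequality. It gives no rate, however, and tacitly assumes $\hat{\mathbf{c}_t}$ has a limit to substitute; the standard repair is the consistency theorem for maximizers of concave criteria, which applies directly here.

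Your route instead obtains consistency as a corollary of the finite-sample bound. That buys an explicit rate and, via Borel--Cantelli, almost-sure convergence. The price is exactly the one you diagnosed: the confidence level $1-n^{-2}$ does not improve with $K$, so Proposition~\ref{pro:robustness} as stated does not imply consistency for fixed $n$, and you must reopen its Hoeffding step. In reopening it you also inherit its linearization, which uses strong convexity only near $\mathbf{c}_t^*$. That step is circular until $\hat{\mathbf{c}_t}$ is known to lie in a fixed neighborhood.

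Relatedly, strict concavity on $\mathbf{1}^\perp$ gives uniqueness but not existence of the MLE. At finite $K$ it exists only if the directed win graph is strongly connected; it diverges, e.g., when one agent wins every comparison it takes part in. Under the Bradley--Terry model this event has probability tending to one, but you should state it. Showing $\mathcal{L}(\mathbf{c}_t^*+\Delta)>\mathcal{L}(\mathbf{c}_t^*)$ for all $\Delta\perp\mathbf{1}$ on a sphere of fixed radius settles existence, localization and your rate at once.

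Your remark that softmax is translation invariant, so the ambiguity never reaches $\psi$, is a useful point the paper does not make.
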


This proposition serves as an interpretability bridge: it links the statistically robust scores derived in Section \ref{subsec:robustness} to the game-theoretic concept of contribution assignment. This result is not intended to suggest that MARS-RA recovers true Shapley values in practice, but rather to provide an interpretive lens for understanding the semantics of the aggregated scores under idealized assumptions.

\section{MARS-Bench}

\begin{figure}
    \centering
    \includegraphics[width=\columnwidth]{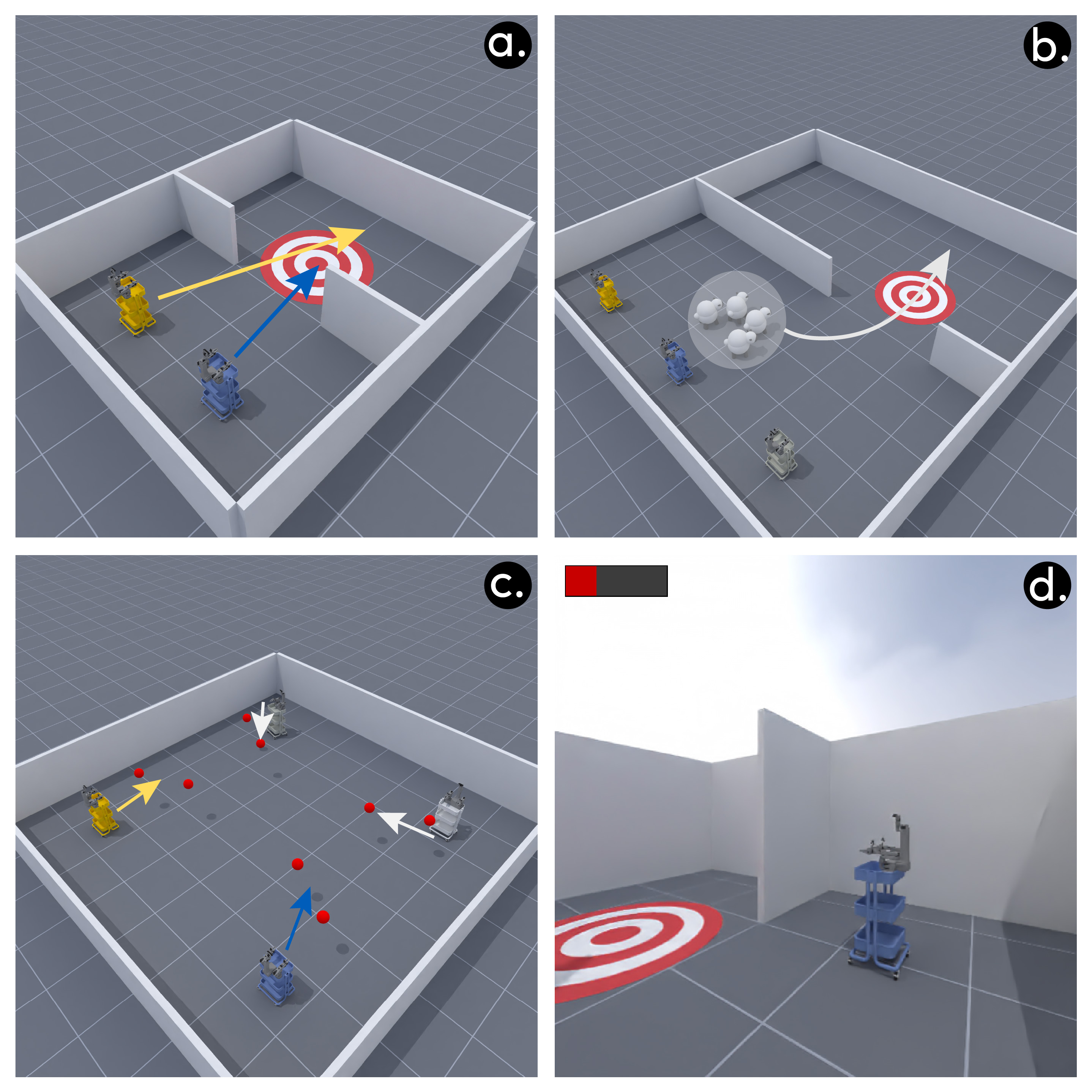}
    \caption{(a–c) Visualizations of the Pass Gate, Herd Sheep, and Collect Ball tasks in MARS-Bench, respectively. Colored arrows depict the intended movements of agents and objects. (d) An example of an agent’s egocentric observation. The icon in the upper-left corner denotes the agent’s current energy level; once depleted, the agent is removed from the environment and re-enters with full energy after a random number of steps.}
    \label{fig:mars-bench}
\end{figure}

\begin{figure*}[htbp]
\centering
\begin{subfigure}{0.32\linewidth}
  \centering
  \includegraphics[width=\linewidth]{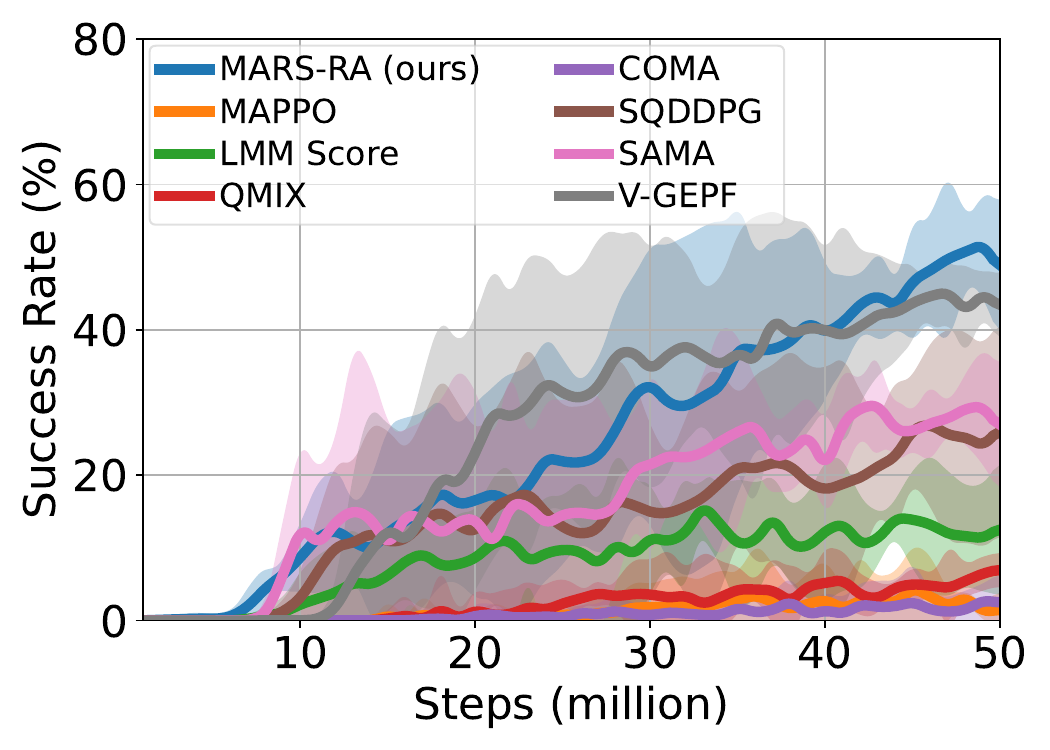}
  \caption{Pass Gate}
  \label{fig:A}
\end{subfigure}
\hfill
\begin{subfigure}{0.32\linewidth}
  \centering
  \includegraphics[width=\linewidth]{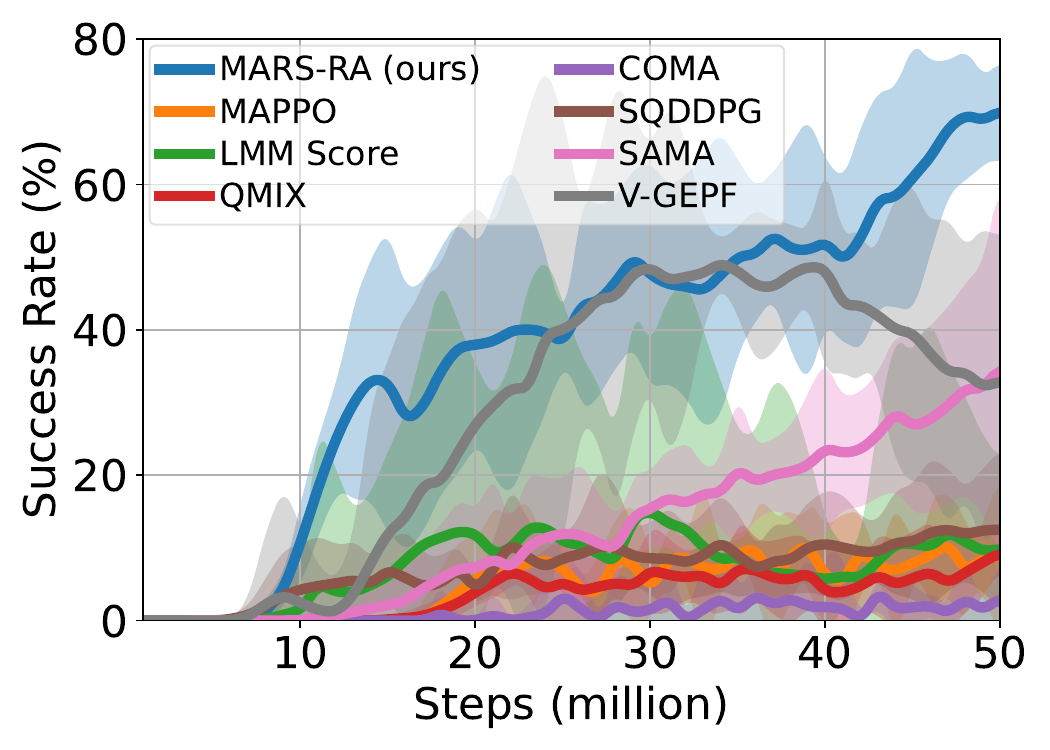}
  \caption{Herd Sheep}
  \label{fig:B}
\end{subfigure}
\hfill
\begin{subfigure}{0.32\linewidth}
  \centering
  \includegraphics[width=\linewidth]{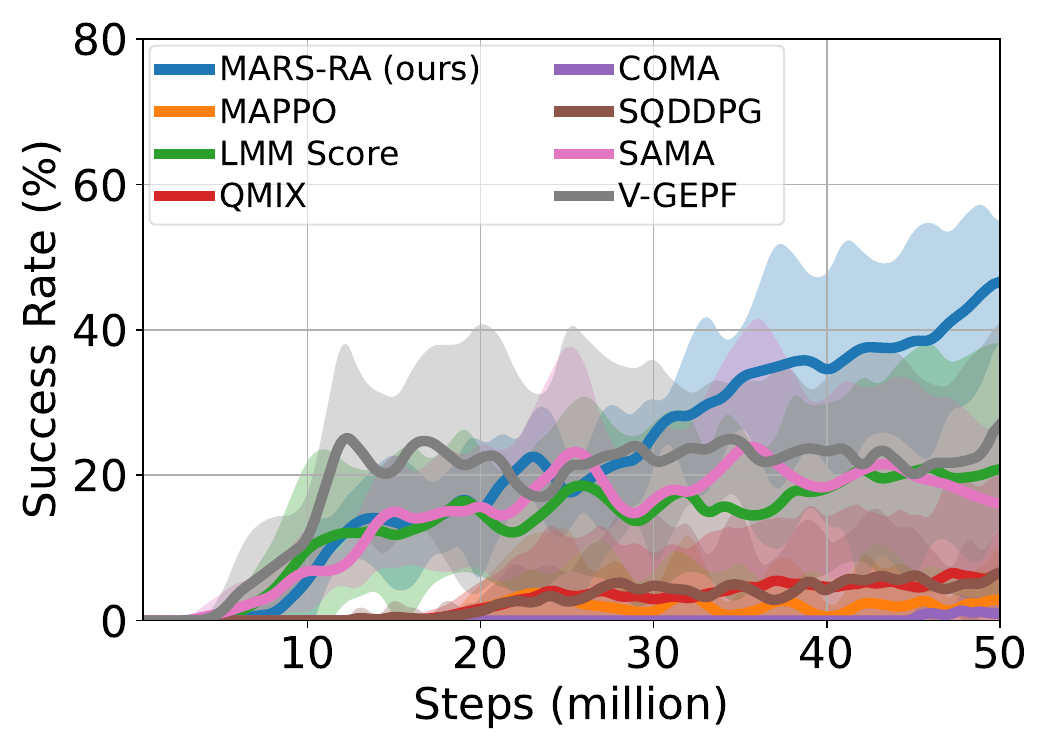}
  \caption{Collect Ball}
  \label{fig:C}
\end{subfigure}
\caption{Learning curves of all compared methods on the three tasks in MARS-RA, trained for 50 million environment steps. Results are averaged over five random seeds, and error bars indicate 95\% confidence intervals.}
\label{fig:overall}
\end{figure*}

Existing embodied AI benchmarks are limited by their fixed-agent settings, failing to account for scenarios with a dynamic number of active agents. We implement a challenging set of embodied cooperative tasks with 2 to 4 agents in ManiSkill3 using XLeRobot \cite{wang2025xlerobot}, and construct MARS-Bench, as illustrated in Figure~\ref{fig:mars-bench}. It uses agents’ egocentric camera views as pixel-based observations and adopts a discrete action space. Two reward modes are provided: sparse rewards for training to closely reflect real-world conditions, and dense rewards for analytical evaluation and debugging. At the start of each episode, agents are assigned random initial battery levels, which are depleted by actions. Agents with depleted batteries are temporarily removed and later respawn with full charge after a random delay, introducing agent-number openness. It consists of three tasks: \textbf{Pass Gate}, two agents are required to traverse a doorway between rooms without collisions; \textbf{Herd Sheep}, three agents cooperatively herd sheep from one room to another, where the sheep follow predefined movement dynamics; \textbf{Collect Ball}, four agents collect all red balls in the room. These tasks correspond to three fundamental multi-agent cooperation scenarios, namely spatio-temporal movement, cooperation, and divide and conquer \cite{wu2021too}. See Appendix~\ref{appendix:mars-bench} for more details.

\section{Experiment}
\label{sec:experiment}

We evaluate MARS-RA on MARS-Bench, using the task success rate as the primary metric. Specifically, success is defined by task completion, whereas collisions and timeouts are failures. Beyond MARS-Bench, we further evaluate the generalization of MARS-RA on Overcooked \cite{carroll2019utility} and Pistonball \cite{terry2021pettingzoo}, with detailed descriptions of both environments provided in Appendix~\ref{appendix:more_env}. MARS-RA adopts MAPPO \cite{yu2022surprising} as the backbone algorithm, while the required pairwise comparisons are generated by Gemini-2.5-Pro \cite{comanici2025gemini}, with a single query per comparison. For comparison, we include MAPPO and LMM Score as ablation variants alongside the baseline methods. See Appendix~\ref{appendix:exp_details} for experiment details.

\begin{itemize}[label=\textendash, noitemsep, topsep=0pt, leftmargin=*]
    \item \textbf{MAPPO}. It is a widely used policy-gradient algorithm for cooperative MARL and is adopted as the backbone of our method.
  \item \textbf{LMM Score}. This baseline queries the LMM using the task description and all agents’ observations, generating per-agent contribution scores in $[0,1]$ that are incorporated into learning via potential-based reward shaping.
  \item \textbf{QMIX} \cite{rashid2020monotonic}, \textbf{COMA} \cite{foerster2017counterfactual} and \textbf{SQDDPG} \cite{wang2020shapley}. These widely used methods address the credit assignment problem through value decomposition, counterfactual advantage estimation, and Shapley value approximation, respectively.
  \item \textbf{SAMA} \cite{li2025multi}. This is a subgoal-based framework designed to address the credit assignment problem in cooperative MARL. It leverages the commonsense priors embedded in LMMs to guide agent coordination, using MAPPO as its underlying backbone algorithm.
  \item \textbf{V-GEPF} \cite{ma2025vision}. It is a hierarchical reward-shaping framework built on MAPPO for cooperative MARL. It employs a potential function derived from a vision language model for semantic guidance, alongside a LMM that selects cooperative skills from a predefined pool.
\end{itemize}



\begin{figure*}[htbp]
\centering
\begin{subfigure}{0.32\linewidth}
  \centering
  \includegraphics[width=\linewidth]{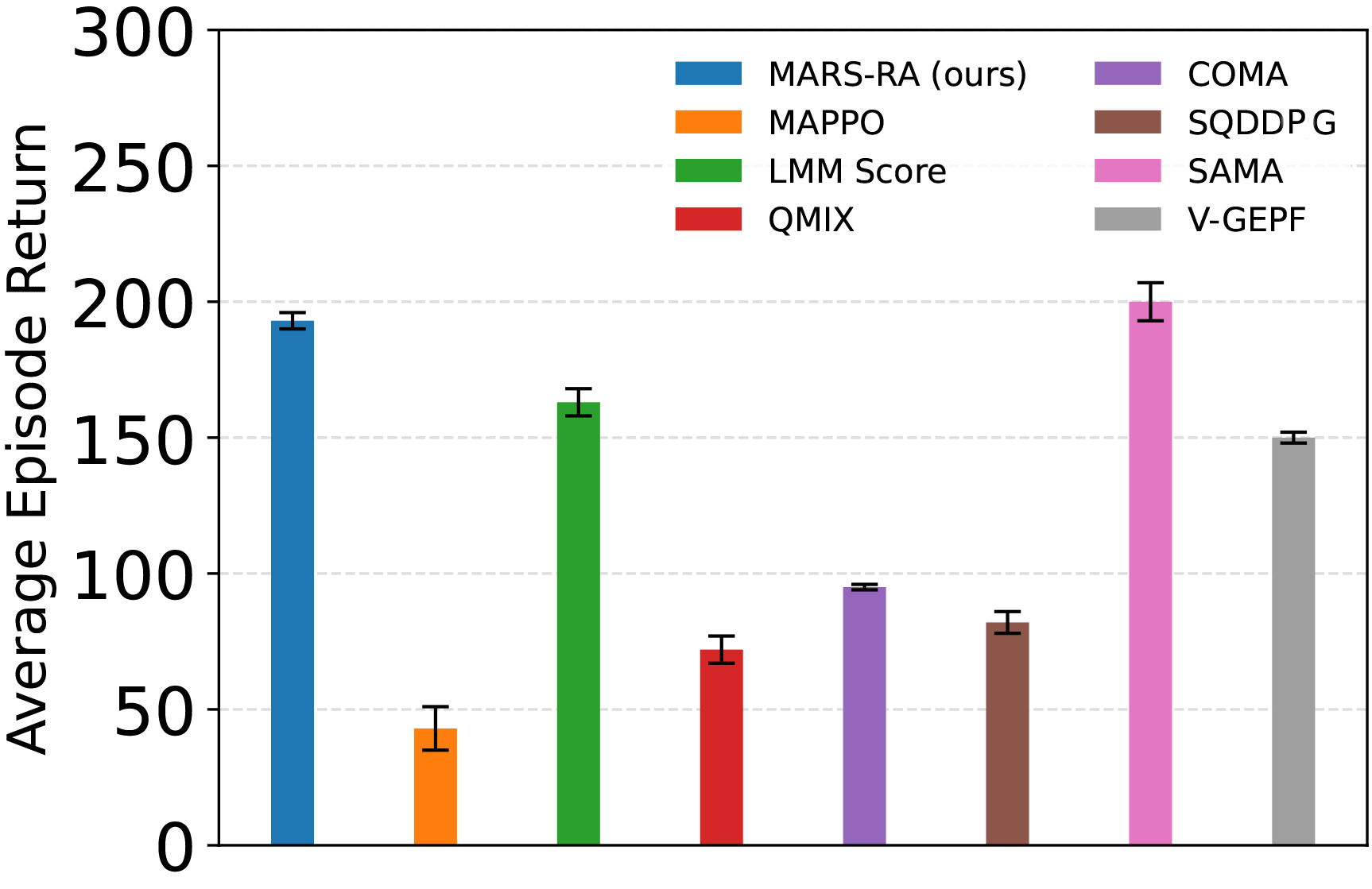}
  \caption{Cramped Room}
  \label{fig:bar1}
\end{subfigure}
\hfill
\begin{subfigure}{0.32\linewidth}
  \centering
  \includegraphics[width=\linewidth]{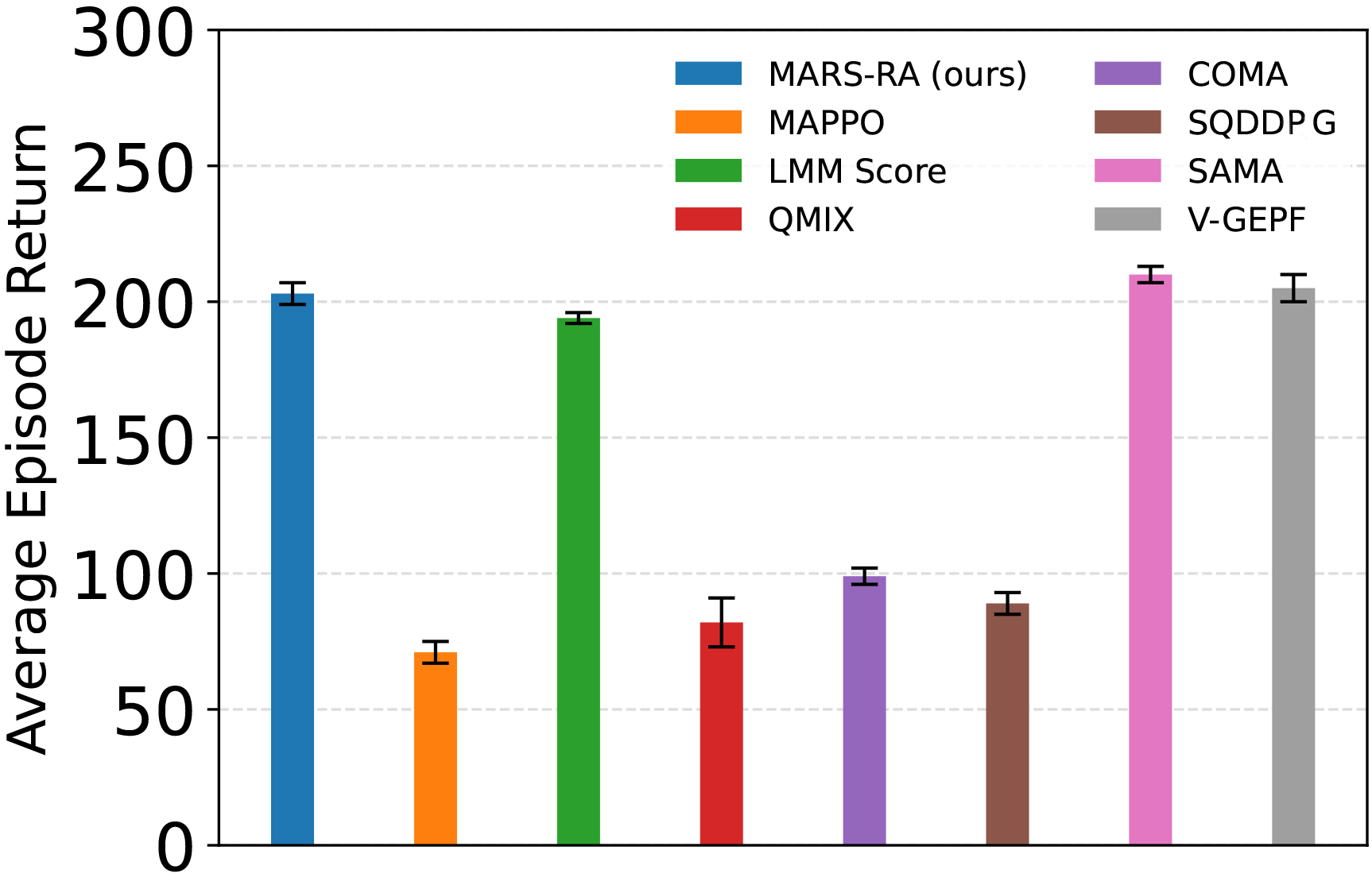}
  \caption{Asymmetric Advantages}
  \label{fig:bar2}
\end{subfigure}
\hfill
\begin{subfigure}{0.32\linewidth}
  \centering
  \includegraphics[width=\linewidth]{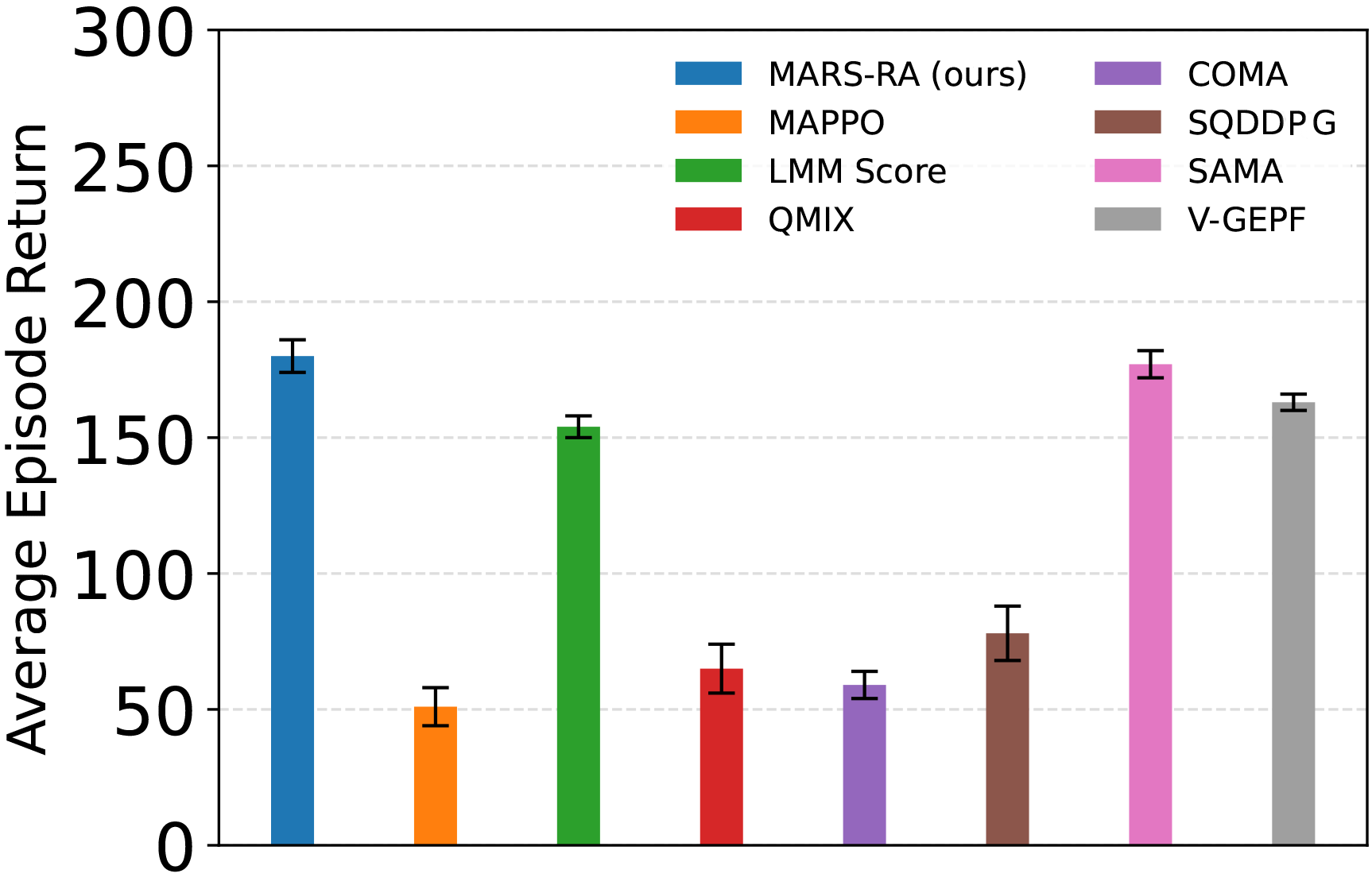}
  \caption{Coordination Ring}
  \label{fig:bar3}
\end{subfigure}
\hfill
\begin{subfigure}{0.32\linewidth}
  \centering
  \includegraphics[width=\linewidth]{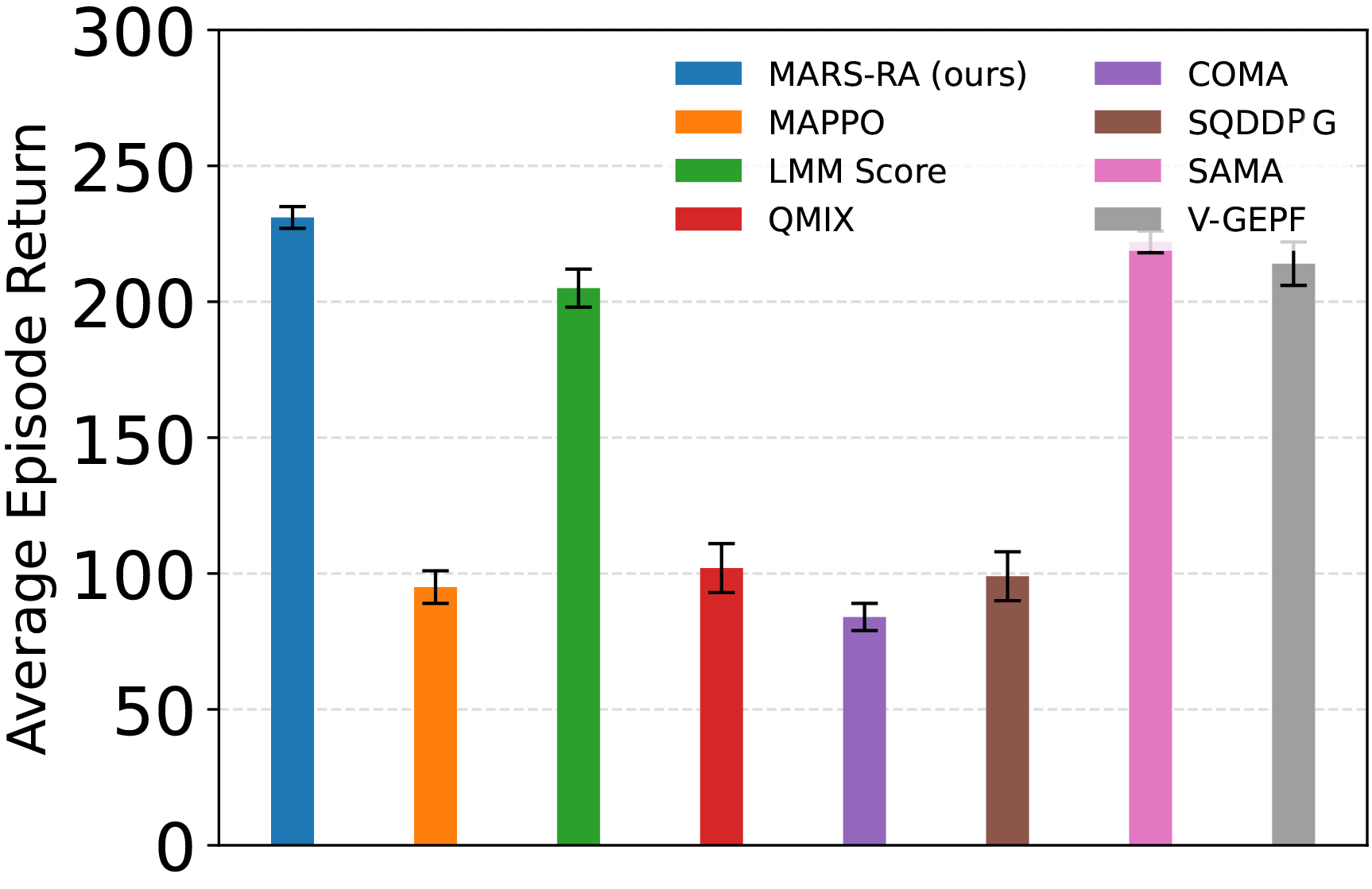}
  \caption{Forced Coordination}
  \label{fig:bar4}
\end{subfigure}
\hfill
\begin{subfigure}{0.32\linewidth}
  \centering
  \includegraphics[width=\linewidth]{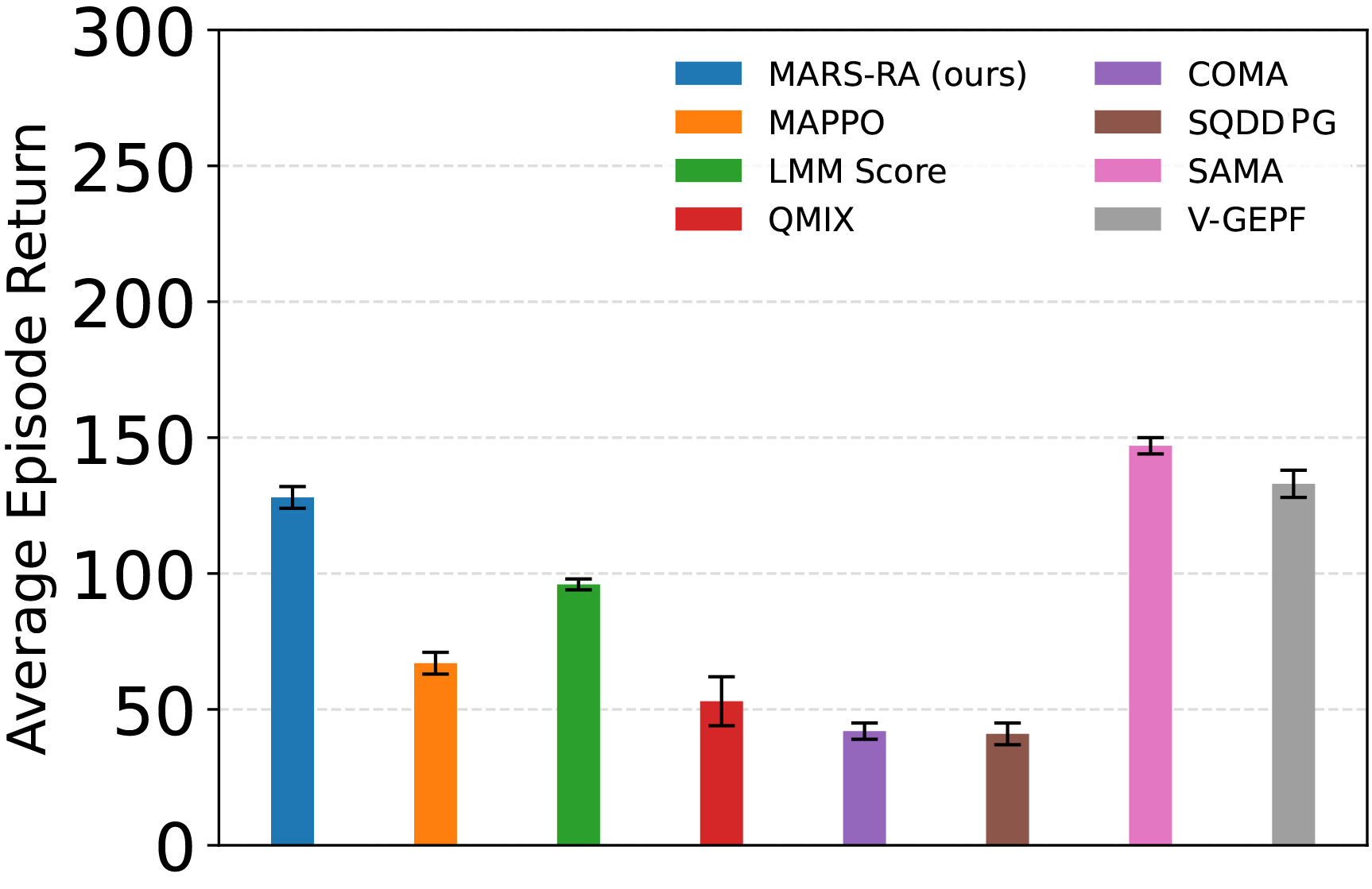}
  \caption{Counter Circuit}
  \label{fig:bar5}
\end{subfigure}
\hfill
\begin{subfigure}{0.32\linewidth}
  \centering
  \includegraphics[width=\linewidth]{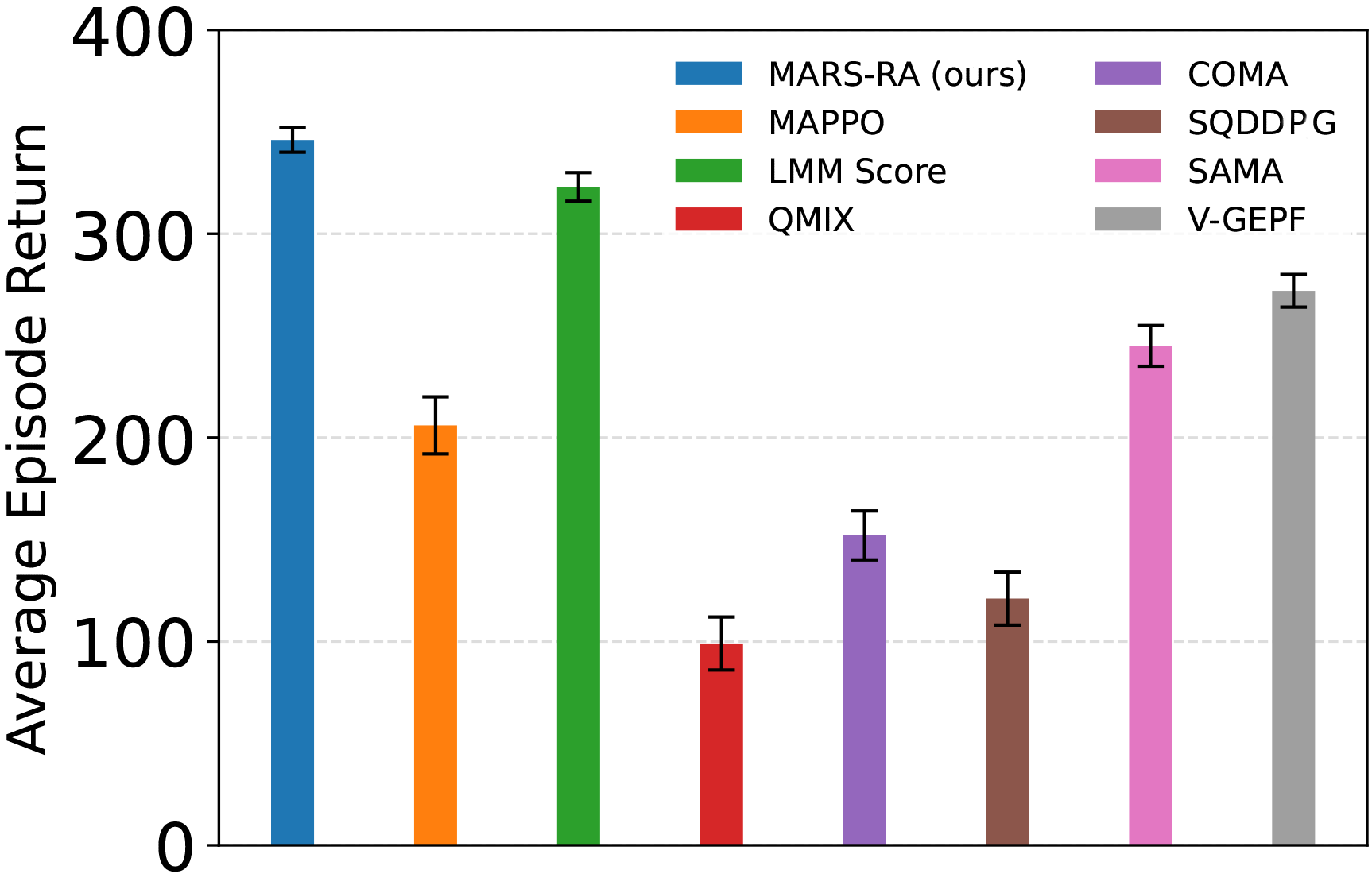}
  \caption{Pistonball}
  \label{fig:bar6}
\end{subfigure}
\caption{Performance comparison on five Overcooked tasks and Pistonball after training for 1 million environment steps, averaged over 10 random seeds, with standard error.}
\label{fig:overcooked_and_positionball}
\end{figure*}

\subsection{Overall Performance on MARS-Bench}
 
Figure~\ref{fig:overall} displays the learning curves for all baselines. MARS-RA surpasses all baselines on three tasks, maintaining success rates above 47\%. In particular, it attains a success rate of exceeding 70\% on the highly collaborative Herd Sheep task. These results indicate that MARS-RA guides effective policy learning across embodied AI cooperative tasks involving 2 to 4 agents.

All selected baselines obtain success rates below 50\% across the three MARS-Bench tasks, highlighting the challenging nature of the benchmark. QMIX and COMA do not exhibit meaningful cooperative behavior during training. SQDDPG shows stronger performance in the Pass Gate task with fewer agents, but its effectiveness diminishes in tasks with more agents, likely due to increased estimation error in Shapley-based credit assignment. Both SAMA and V-GEPF, as state-of-the-art methods, achieve competitive performance but remain consistently inferior to MARS-RA. This performance gap may partly arise from the design assumptions underlying these approaches. Specifically, SAMA’s subgoal formulation is better suited to environments with clearly discretized task structures, whereas embodied AI tasks often lack such explicit decompositions. In contrast, V-GEPF relies on an Adaptive Skill Selection module that assumes timely and informative environmental feedback, while embodied AI tasks typically exhibit delayed feedback and long-horizon dependencies. In this experiment, we computed a 3-cycle rate of 8.95\%, which is substantially lower than the theoretical random baseline of 25.00\%. In addition, the held-out 3-way NLL (win/loss/tie) is 0.41, substantially below the random baseline of 1.10. 

\subsection{Results on Overcooked and Pistonball}
Figures~\ref{fig:overcooked_and_positionball} (a–e) show the evaluation results of the baselines after training on the five Overcooked tasks. MARS-RA achieves performance comparable to the state-of-the-art method SAMA in the Overcooked environment, and outperforms SAMA on the Coordination Ring and Forced Coordination tasks. These results indicate that MARS-RA exhibits stable performance in standard MARL settings such as Overcooked. We also observe that the LMM Score variant performs comparably to MARS-RA on these five tasks, further confirming that its effectiveness improves on tasks with explicit objectives. Figure~\ref{fig:overcooked_and_positionball} (f) illustrates the evaluation performance of the baselines following training on the Pistonball task which involves 10 agents. MARS-RA consistently outperforms all compared baselines, suggesting that the proposed rank-aggregation formulation remains effective beyond small team sizes. We observe that SAMA and V-GEPF achieve relatively modest performance, which may be attributed to the simplicity of the Pistonball task, where there is limited scope for subtask decomposition or the exploitation of more complex strategies.

\subsection{Ablation Study}
As shown in Figure~\ref{fig:overall}, compared to MARS-RA, the backbone algorithm MAPPO alone shows limited ability to learn effective cooperative policies across the three tasks. In contrast, the LMM Score variant learns cooperative behaviors to a limited extent but exhibits substantial performance oscillations, with inferior overall performance. This suggests that the spatio-temporal inference capabilities of LMMs can inform MARL training, whereas direct scoring is less effective than rank aggregation in enabling this translation. Notably, on the Collect Ball task with its explicit objectives, the LMM Score variant achieves relatively competitive performance.






\subsection{Further Analysis}

\begin{figure}
    \centering
    \includegraphics[width=\columnwidth]{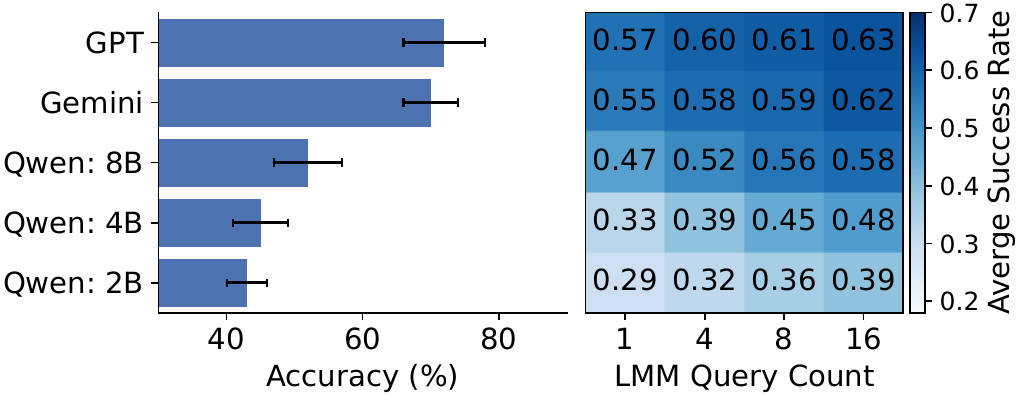}
    \caption{Left: Pairwise comparison accuracy of the selected LMMs, where GPT refers to GPT-5.1, Gemini to Gemini-2.5-Pro, and Qwen to Qwen3-VL. Right: Average success rates of MARS-RA on MARS-Bench under different LMMs and query counts.}
    \label{fig:heatmap}
\end{figure}

\noindent \textbf{Accuracy of LMM-based Pairwise Comparisons.} We further analyze the accuracy of LMM-based pairwise comparisons across a range of popular commercial and open-source LMMs, including Gemini-2.5-Pro, GPT-5.1 \cite{gpt51}, and Qwen3-VL (2B, 4B, and 8B) \cite{yang2025qwen3}. Accuracy is calculated by comparing the LMMs' pairwise comparison results with the ground truth defined by the dense reward function of MARS-Bench, averaged over the three benchmark tasks. As shown in the left panel of Figure~\ref{fig:heatmap}, both Gemini-2.5-Pro and GPT-5.1 achieve accuracy levels above 70\%, while the three smaller-scale Qwen3-VL models also reach accuracies exceeding 43\%. When an agent’s egocentric observation is severely limited and lacks sufficient visual information (e.g., teammate positions or goal locations) to assess team contribution, our evaluation shows that LMMs can still make correct judgments as long as at least one agent’s observation contains informative visual cues. Pairwise comparison errors most frequently occur when all agents face a wall and lack informative visual reference cues. See Appendix~\ref{appendix:acc} for more details.

\noindent \textbf{Model Selection and Number of Queries.} We investigate the impact of LMM selection with different pairwise comparison accuracies and the number of LMM queries used for pairwise comparisons on MARS-RA performance. We train MARS-RA on the three MARS-Bench tasks using Gemini-2.5-Pro, GPT-5.1, and Qwen3-VL (2B, 4B, and 8B) under different numbers of pairwise comparison queries. The average success rates across the three tasks for each configuration are reported in the right panel of Figure~\ref{fig:heatmap}. We observe that both employing LMMs with higher pairwise comparison accuracy and increasing the number of pairwise comparison queries lead to improved MARS-RA performance. However, when using high-accuracy LMMs, the performance gains from increasing the query count are less pronounced compared to those achieved with lower-accuracy LMMs. These results suggest a trade-off and complementary relationship between LMM pairwise comparison accuracy and the number of comparison queries. Further details can be found in Appendix~\ref{appendix:acc}.


\noindent \textbf{Comparison of Rank Aggregation Methods.} We implemented Rank Centrality \cite{negahban2012iterative} as an alternative rank aggregation method under the same experimental settings as in this section, and report the results in Table~\ref{tab:rank_aggregation}. Each value represents the success rate on the corresponding MARS-Bench task achieved by the final trained model. The experimental results show that MARS-RA with Rank Centrality performs slightly worse than with Bradley--Terry model, although the performance gap is small.

\begin{table}[t]
\centering

\setlength{\tabcolsep}{0.8mm}
\begin{tabular}{lccc}
\toprule
Method & Pass Gate & Herd Sheep & Collect Ball \\
\midrule
RC & 0.50 $\pm$ 0.03 & 0.67 $\pm$ 0.02 & 0.46 $\pm$ 0.02 \\
BT              & 0.52 $\pm$ 0.01 & 0.68 $\pm$ 0.03 & 0.46 $\pm$ 0.02 \\
\bottomrule
\end{tabular}
\caption{Comparison of different rank aggregation methods based on average performance across different levels in MARS-Bench. RC denotes Rank Centrality, and BT denotes the Bradley--Terry model.}
\label{tab:rank_aggregation}
\end{table}

\noindent \textbf{Real-World Validation.} We conduct a real-world validation of MARS-RA, as illustrated in Figure~\ref{fig:real_world}. We deploy two XLeRobot robots to perform the Pass Gate task in a real-world indoor environment. The entire room is first captured via 3D scanning and reconstructed as a virtual 3D environment, which is then instantiated as a task in MARS-Bench. MARS-RA is trained in simulation and then deployed on real robots for evaluation in a physical environment, achieving a success rate of 64\% over 25 runs. This result preliminarily suggests the potential of MARS-RA to guide agents toward effective cooperative policies in scenarios with real-world–level complexity. Additional details are provided in Appendix~\ref{appendix:realworld}.

\begin{figure}
    \centering
    \includegraphics[width=\columnwidth]{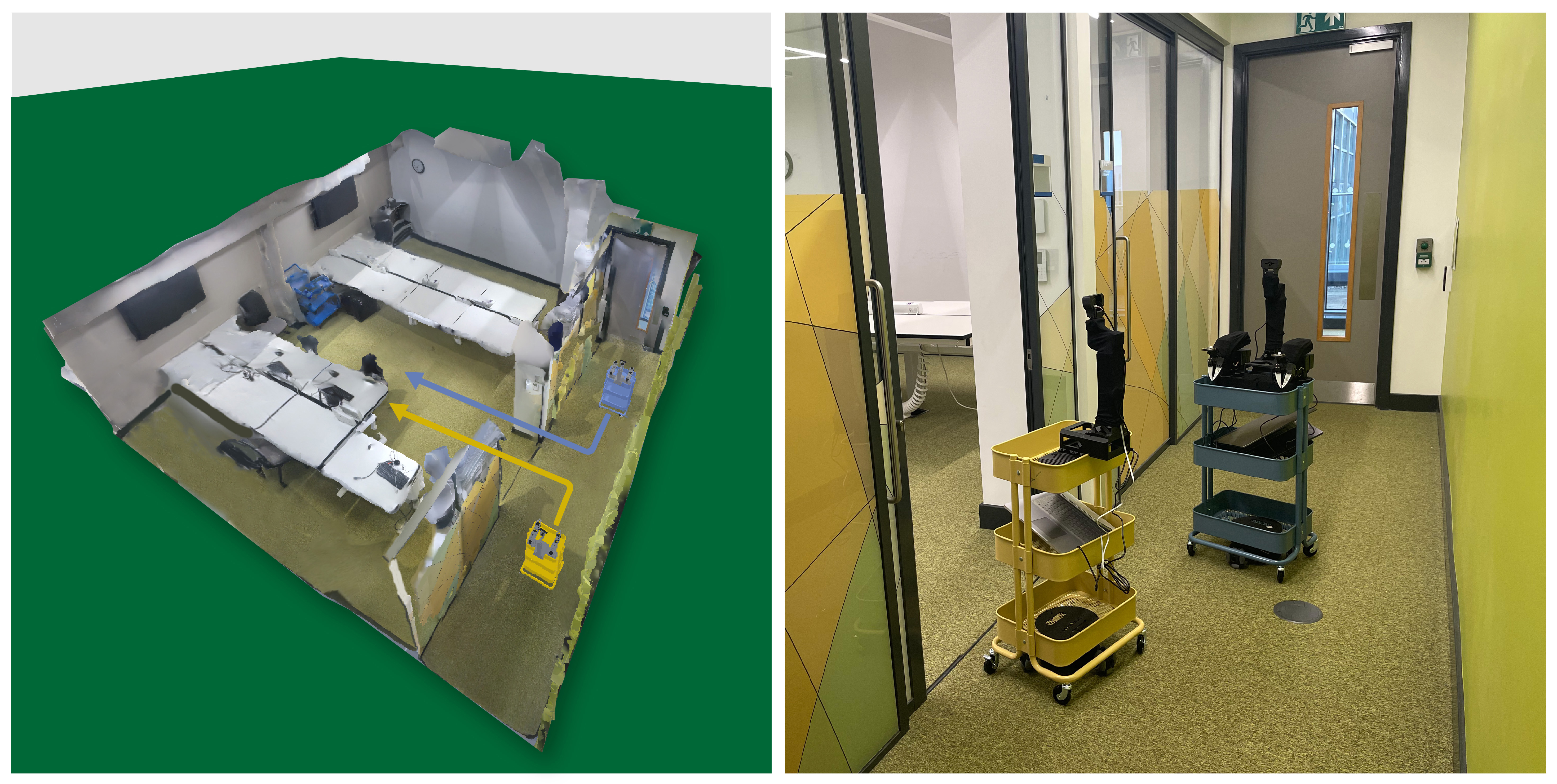}
    \caption{Left: A real-world 3D-scanned indoor environment instantiated as a MARS-Bench Pass Gate task, where two robots enter the room from a corridor without collisions. Right: Real-world deployment after virtual-environment training.}
    \label{fig:real_world}
\end{figure}






\section{Conclusion}
\label{sec:conclusion}

In this paper, we propose MARS-RA, a framework that addresses the credit assignment problem in MARL for cooperative embodied AI systems through a rank aggregation perspective. The framework outperforms strong baselines across different tasks. Beyond empirical validation, we present theoretical analysis to substantiate the proposed framework. This work suggests a new direction for incorporating prior knowledge from foundation models into MARL training.














\section{Limitations}

There exist several avenues for improving this work and mitigating the limitations discussed below: (1) MARS-RA depends on LMMs for pairwise agent comparisons. While rank aggregation helps mitigate noise in these comparisons, the accuracy of LMM-generated judgments remains an important factor affecting overall performance. In MARS-RA, ordered pairwise comparisons are employed to alleviate position bias, and future work will investigate additional intermediate mechanisms to further reduce this reliance. (2) MARS-RA is designed for cooperative tasks whose states and outcomes can be reasonably assessed through visual observations and textual task descriptions. Tasks that rely on fine-grained physical signals, internal states, or domain-specific metrics that are not visually observable may fall outside the current scope of the framework. Future work will explore extensions or alternative formulations better suited to such tasks. (3) Non-stationarity is inherent in MARL, and the pairwise comparisons in MARS-RA do not eliminate this issue. However, the denoising effect of rank aggregation mitigates the impact of non-stationarity on the resulting contribution credits. Moreover, the potential-based reward shaping mechanism ensures that introducing these credits into MARL training preserves policy invariance under stationary transitions. In future work, we will continue to explore new approaches to address non-stationarity.

\bibliography{main}

@book{bishop2006pattern,
  title={Pattern recognition and machine learning},
  author={Bishop, Christopher M and Nasrabadi, Nasser M},
  volume={4},
  number={4},
  year={2006},
  publisher={Springer}
}

@String(AAAI = {AAAI})

@article{turing2021computing,
  title={Computing machinery and intelligence (1950)},
  author={Turing, Alan M},
  journal={Mind},
  volume={59},
  number={236},
  pages={33--60},
  year={2021}
}

@book{clark1998being,
  title={Being there: Putting brain, body, and world together again},
  author={Clark, Andy},
  year={1998},
  publisher={MIT press}
}

@article{feng2025embodied,
  title={Embodied ai: From llms to world models},
  author={Feng, Tongtong and Wang, Xin and Jiang, Yu-Gang and Zhu, Wenwu},
  journal={arXiv preprint arXiv:2509.20021},
  year={2025}
}

@article{minsky2007steps,
  title={Steps toward artificial intelligence},
  author={Minsky, Marvin},
  journal={Proceedings of the IRE},
  volume={49},
  number={1},
  pages={8--30},
  year={2007},
  publisher={IEEE}
}

@article{foerster2017counterfactual,
  title={Counterfactual multi-agent policy gradients (2017)},
  author={Foerster, Jakob and Farquhar, Gregory and Afouras, Triantafyllos and Nardelli, Nantas and Whiteson, Shimon},
  journal={arXiv preprint arXiv:1705.08926},
  year={2017}
}

@article{sunehag2017value,
  title={Value-decomposition networks for cooperative multi-agent learning},
  author={Sunehag, Peter and Lever, Guy and Gruslys, Audrunas and Czarnecki, Wojciech Marian and Zambaldi, Vinicius and Jaderberg, Max and Lanctot, Marc and Sonnerat, Nicolas and Leibo, Joel Z and Tuyls, Karl and others},
  journal={arXiv preprint arXiv:1706.05296},
  year={2017}
}

@article{rashid2020monotonic,
  title={Monotonic value function factorisation for deep multi-agent reinforcement learning},
  author={Rashid, Tabish and Samvelyan, Mikayel and De Witt, Christian Schroeder and Farquhar, Gregory and Foerster, Jakob and Whiteson, Shimon},
  journal={Journal of Machine Learning Research},
  volume={21},
  number={178},
  pages={1--51},
  year={2020}
}

@article{papoudakis2020benchmarking,
  title={Benchmarking multi-agent deep reinforcement learning algorithms in cooperative tasks},
  author={Papoudakis, Georgios and Christianos, Filippos and Sch{\"a}fer, Lukas and Albrecht, Stefano V},
  journal={arXiv preprint arXiv:2006.07869},
  year={2020}
}

@article{nagpal2025leveraging,
  title={Leveraging Large Language Models for Effective and Explainable Multi-Agent Credit Assignment},
  author={Nagpal, Kartik and Dong, Dayi and Bouvier, Jean-Baptiste and Mehr, Negar},
  journal={arXiv preprint arXiv:2502.16863},
  year={2025}
}

@article{wong2023deep,
  title={Deep multiagent reinforcement learning: Challenges and directions},
  author={Wong, Annie and B{\"a}ck, Thomas and Kononova, Anna V and Plaat, Aske},
  journal={Artificial Intelligence Review},
  volume={56},
  number={6},
  pages={5023--5056},
  year={2023},
  publisher={Springer}
}

@article{team2023gemini,
  title={Gemini: a family of highly capable multimodal models},
  author={Team, Gemini and Anil, Rohan and Borgeaud, Sebastian and Alayrac, Jean-Baptiste and Yu, Jiahui and Soricut, Radu and Schalkwyk, Johan and Dai, Andrew M and Hauth, Anja and Millican, Katie and others},
  journal={arXiv preprint arXiv:2312.11805},
  year={2023}
}

@article{liu2023visual,
  title={Visual instruction tuning},
  author={Liu, Haotian and Li, Chunyuan and Wu, Qingyang and Lee, Yong Jae},
  journal={Advances in neural information processing systems},
  volume={36},
  pages={34892--34916},
  year={2023}
}

@article{serra2023learning,
  title={Learning scalable and efficient communication policies for multi-robot collision avoidance},
  author={Serra-G{\'o}mez, {\'A}lvaro and Zhu, Hai and Brito, Bruno and B{\"o}hmer, Wendelin and Alonso-Mora, Javier},
  journal={Autonomous Robots},
  volume={47},
  number={8},
  pages={1275--1297},
  year={2023},
  publisher={Springer}
}

@article{patel2023dream,
  title={Dream: Decentralized reinforcement learning for exploration and efficient energy management in multi-robot systems},
  author={Patel, Dipam and Pham, Phu and Tiwari, Kshitij and Bera, Aniket},
  journal={arXiv preprint arXiv:2309.17433},
  year={2023}
}

@article{yang2025qwen3,
  title={Qwen3 technical report},
  author={Yang, An and Li, Anfeng and Yang, Baosong and Zhang, Beichen and Hui, Binyuan and Zheng, Bo and Yu, Bowen and Gao, Chang and Huang, Chengen and Lv, Chenxu and others},
  journal={arXiv preprint arXiv:2505.09388},
  year={2025}
}

@inproceedings{li2025multi,
  title={Multi-agent credit assignment with pretrained language models},
  author={Li, Wenhao and Qiao, Dan and Wang, Baoxiang and Wang, Xiangfeng and Yin, Wei and Shen, Hao and Jin, Bo and Zha, Hongyuan},
  booktitle={International Conference on Artificial Intelligence and Statistics},
  pages={1945--1953},
  year={2025},
  organization={PMLR}
}

@inproceedings{tian2025identifying,
  title={Identifying and mitigating position bias of multi-image vision-language models},
  author={Tian, Xinyu and Zou, Shu and Yang, Zhaoyuan and Zhang, Jing},
  booktitle={Proceedings of the Computer Vision and Pattern Recognition Conference},
  pages={10599--10609},
  year={2025}
}

@article{brooks1991new,
  title={New approaches to robotics},
  author={Brooks, Rodney A},
  journal={Science},
  volume={253},
  number={5025},
  pages={1227--1232},
  year={1991},
  publisher={American Association for the Advancement of Science}
}

@article{huang2023language,
  title={Language is not all you need: Aligning perception with language models},
  author={Huang, Shaohan and Dong, Li and Wang, Wenhui and Hao, Yaru and Singhal, Saksham and Ma, Shuming and Lv, Tengchao and Cui, Lei and Mohammed, Owais Khan and Patra, Barun and others},
  journal={Advances in Neural Information Processing Systems},
  volume={36},
  pages={72096--72109},
  year={2023}
}

@article{li2024lmeye,
  title={Lmeye: An interactive perception network for large language models},
  author={Li, Yunxin and Hu, Baotian and Chen, Xinyu and Ma, Lin and Xu, Yong and Zhang, Min},
  journal={IEEE Transactions on Multimedia},
  year={2024},
  publisher={IEEE}
}

@article{rocamonde2023vision,
  title={Vision-language models are zero-shot reward models for reinforcement learning},
  author={Rocamonde, Juan and Montesinos, Victoriano and Nava, Elvis and Perez, Ethan and Lindner, David},
  journal={arXiv preprint arXiv:2310.12921},
  year={2023}
}

@inproceedings{ng1999policy,
  title={Policy invariance under reward transformations: Theory and application to reward shaping},
  author={Ng, Andrew Y and Harada, Daishi and Russell, Stuart},
  booktitle={Icml},
  volume={99},
  pages={278--287},
  year={1999},
  organization={Citeseer}
}

@article{lin2025speaking,
  title={Speaking the language of teamwork: Llm-guided credit assignment in multi-agent reinforcement learning},
  author={Lin, Muhan and Shi, Shuyang and Guo, Yue and Tadiparthi, Vaishnav and Chalaki, Behdad and Pari, Ehsan Moradi and Stepputtis, Simon and Kim, Woojun and Campbell, Joseph and Sycara, Katia},
  journal={arXiv preprint arXiv:2502.03723},
  year={2025}
}

@inproceedings{wei2025lero,
  title={LERO: LLM-driven Evolutionary framework with Hybrid Rewards and Enhanced Observation for Multi-Agent Reinforcement Learning},
  author={Wei, Yuan and Shan, Xiaohan and Miao, Ran and Li, Jianmin},
  booktitle={International Conference on Intelligent Computing},
  pages={15--26},
  year={2025},
  organization={Springer}
}

@misc{debreu1960individual,
  title={Individual choice behavior: A theoretical analysis},
  author={Debreu, Gerard},
  year={1960},
  publisher={JSTOR}
}

@article{herbrich2006trueskill,
  title={TrueSkill™: a Bayesian skill rating system},
  author={Herbrich, Ralf and Minka, Tom and Graepel, Thore},
  journal={Advances in neural information processing systems},
  volume={19},
  year={2006}
}

@article{critchlow1991probability,
  title={Probability models on rankings},
  author={Critchlow, Douglas E and Fligner, Michael A and Verducci, Joseph S},
  journal={Journal of mathematical psychology},
  volume={35},
  number={3},
  pages={294--318},
  year={1991},
  publisher={Elsevier}
}

@article{kolde2012robust,
  title={Robust rank aggregation for gene list integration and meta-analysis},
  author={Kolde, Raivo and Laur, Sven and Adler, Priit and Vilo, Jaak},
  journal={Bioinformatics},
  volume={28},
  number={4},
  pages={573--580},
  year={2012},
  publisher={Oxford University Press}
}

@inproceedings{di2025balancing,
  title={Balancing invariant and specific knowledge for domain generalization with online knowledge distillation},
  author={Di Zhao, Jingfeng Zhang and Hu, Hongsheng and Fournier-Viger, Philippe and Dobbie, Gillian and Koh, Yun Sing},
  booktitle={Proceedings of the Thirty-Fourth International Joint Conference on Artificial Intelligence, IJCAI-25},
  pages={2440--2448},
  year={2025}
}

@article{wang2026towermind,
  title={TowerMind: A Tower Defence Game Learning Environment and Benchmark for LLM as Agents},
  author={Wang, Dawei and Zhou, Chengming and Zhao, Di and Liu, Xinyuan and Ma, Marci Chi and Ushaw, Gary and Davison, Richard},
  journal={arXiv preprint arXiv:2601.05899},
  year={2026}
}

@inproceedings{zhao2024symmetric,
  title={Symmetric self-paced learning for domain generalization},
  author={Zhao, Di and Koh, Yun Sing and Dobbie, Gillian and Hu, Hongsheng and Fournier-Viger, Philippe},
  booktitle={Proceedings of the AAAI Conference on Artificial Intelligence},
  volume={38},
  number={15},
  pages={16961--16969},
  year={2024}
}

@article{li2026videothinker,
  title={VideoThinker: Building Agentic VideoLLMs with LLM-Guided Tool Reasoning},
  author={Li, Chenglin and Chen, Qianglong and Han, Feng and Wang, Yikun and Yin, Xingxi and Gong, Yan and Li, Ruilin and Zhang, Yin and Wang, Jiaqi},
  journal={arXiv preprint arXiv:2601.15724},
  year={2026}
}

@article{li2025videopro,
  title={VideoPro: Adaptive Program Reasoning for Long Video Understanding},
  author={Li, Chenglin and Han, Feng and Wang, Yikun and Li, Ruilin and Dong, Shuai and Hou, Haowen and Li, Haitao and Chen, Qianglong and Tao, Feng and Tong, Jingqi and others},
  journal={arXiv preprint arXiv:2509.17743},
  year={2025}
}

@article{dong2025interleaved,
  title={Interleaved latent visual reasoning with selective perceptual modeling},
  author={Dong, Shuai and Wang, Siyuan and Liu, Xingyu and Li, Chenglin and Hou, Haowen and Wei, Zhongyu},
  journal={arXiv preprint arXiv:2512.05665},
  year={2025}
}

@inproceedings{
zhao2026unlearning,
title={Unlearning during Training: Domain-Specific Gradient Ascent for Domain Generalization},
author={Di Zhao and Jingfeng Zhang and Hongsheng Hu and Philippe Fournier-Viger and Gillian Dobbie and Yun Sing Koh},
booktitle={The Fourteenth International Conference on Learning Representations},
year={2026},
url={https://openreview.net/forum?id=9ufS5Jl0O0}
}

@article{negahban2012iterative,
  title={Iterative ranking from pair-wise comparisons},
  author={Negahban, Sahand and Oh, Sewoong and Shah, Devavrat},
  journal={Advances in neural information processing systems},
  volume={25},
  year={2012}
}

@article{ma2022tale,
  title={A tale of hodgerank and spectral method: Target attack against rank aggregation is the fixed point of adversarial game},
  author={Ma, Ke and Xu, Qianqian and Zeng, Jinshan and Li, Guorong and Cao, Xiaochun and Huang, Qingming},
  journal={IEEE Transactions on Pattern Analysis and Machine Intelligence},
  volume={45},
  number={4},
  pages={4090--4108},
  year={2022},
  publisher={IEEE}
}

@article{abadi2025challenges,
  title={Challenges in Credit Assignment for Multi-Agent Reinforcement Learning in Open Agent Systems},
  author={Abadi, Alireza Saleh and Soh, Leen-Kiat},
  journal={arXiv preprint arXiv:2510.27659},
  year={2025}
}

@inproceedings{tang2023roma,
  title={RoMA: Resilient Multi-Agent Reinforcement Learning with Dynamic Participating Agents},
  author={Tang, Xuting and Xu, Jia and Wang, Shusen},
  booktitle={2023 IEEE 12th International Conference on Cloud Networking (CloudNet)},
  pages={247--255},
  year={2023},
  organization={IEEE}
}

@article{tao2024maniskill3,
  title={Maniskill3: Gpu parallelized robotics simulation and rendering for generalizable embodied ai},
  author={Tao, Stone and Xiang, Fanbo and Shukla, Arth and Qin, Yuzhe and Hinrichsen, Xander and Yuan, Xiaodi and Bao, Chen and Lin, Xinsong and Liu, Yulin and Chan, Tse-kai and others},
  journal={arXiv preprint arXiv:2410.00425},
  year={2024}
}

@article{shapley1953value,
  title={A value for n-person games},
  author={Shapley, Lloyd S and others},
  year={1953},
  publisher={Princeton University Press Princeton}
}

@inproceedings{cohen2017open,
  title={Open decentralized pomdps},
  author={Cohen, Jonathan and Dibangoye, Jilles-Steeve and Mouaddib, Abdel-Illah},
  booktitle={2017 IEEE 29th International Conference on Tools with Artificial Intelligence (ICTAI)},
  pages={977--984},
  year={2017},
  organization={IEEE}
}

@inproceedings{devlin2012dynamic,
  title={Dynamic potential-based reward shaping},
  author={Devlin, Sam Michael and Kudenko, Daniel},
  booktitle={11th International Conference on Autonomous Agents and Multiagent Systems (AAMAS 2012)},
  pages={433--440},
  year={2012},
  organization={IFAAMAS}
}

@article{christiano2017deep,
  title={Deep reinforcement learning from human preferences},
  author={Christiano, Paul F and Leike, Jan and Brown, Tom and Martic, Miljan and Legg, Shane and Amodei, Dario},
  journal={Advances in neural information processing systems},
  volume={30},
  year={2017}
}

@misc{wang2025xlerobot,
    author = {Wang, Gaotian and Lu, Zhuoyi},
    title = {XLeRobot: A Practical Low-cost Household Dual-Arm Mobile Robot Design for General Manipulation},
    howpublished = "\url{https://github.com/Vector-Wangel/XLeRobot}",
    year = {2025}
}

@article{wu2021too,
  title={Too many cooks: Bayesian inference for coordinating multi-agent collaboration},
  author={Wu, Sarah A and Wang, Rose E and Evans, James A and Tenenbaum, Joshua B and Parkes, David C and Kleiman-Weiner, Max},
  journal={Topics in Cognitive Science},
  volume={13},
  number={2},
  pages={414--432},
  year={2021},
  publisher={Wiley Online Library}
}

@inproceedings{xiong2024mqe,
  title={Mqe: Unleashing the power of interaction with multi-agent quadruped environment},
  author={Xiong, Ziyan and Chen, Bo and Huang, Shiyu and Tu, Wei-Wei and He, Zhaofeng and Gao, Yang},
  booktitle={2024 IEEE/RSJ International Conference on Intelligent Robots and Systems (IROS)},
  pages={5918--5924},
  year={2024},
  organization={IEEE}
}

@article{bradley1952rank,
  title={Rank analysis of incomplete block designs: I. the method of paired comparisons},
  author={Bradley, Ralph Allan and Terry, Milton E},
  journal={Biometrika},
  volume={39},
  number={3/4},
  pages={324--345},
  year={1952},
  publisher={JSTOR}
}

@inproceedings{wierstra2008episodic,
  title={Episodic reinforcement learning by logistic reward-weighted regression},
  author={Wierstra, Daan and Schaul, Tom and Peters, Jan and Schmidhuber, Juergen},
  booktitle={International Conference on Artificial Neural Networks},
  pages={407--416},
  year={2008},
  organization={Springer}
}

@article{shi2024judging,
  title={Judging the judges: A systematic study of position bias in llm-as-a-judge},
  author={Shi, Lin and Ma, Chiyu and Liang, Wenhua and Diao, Xingjian and Ma, Weicheng and Vosoughi, Soroush},
  journal={arXiv preprint arXiv:2406.07791},
  year={2024}
}

@inproceedings{harutyunyan2015expressing,
  title={Expressing arbitrary reward functions as potential-based advice},
  author={Harutyunyan, Anna and Devlin, Sam and Vrancx, Peter and Now{\'e}, Ann},
  booktitle={Proceedings of the AAAI conference on artificial intelligence},
  volume={29},
  number={1},
  year={2015}
}

@article{zhang2024combo,
  title={COMBO: compositional world models for embodied multi-agent cooperation},
  author={Zhang, Hongxin and Wang, Zeyuan and Lyu, Qiushi and Zhang, Zheyuan and Chen, Sunli and Shu, Tianmin and Dariush, Behzad and Lee, Kwonjoon and Du, Yilun and Gan, Chuang},
  journal={arXiv preprint arXiv:2404.10775},
  year={2024}
}

@book{bakushinsky2012ill,
  title={Ill-posed problems: theory and applications},
  author={Bakushinsky, Anatoly and Goncharsky, A},
  volume={301},
  year={2012},
  publisher={Springer Science \& Business Media}
}

@article{negahban2012unified,
  title={A unified framework for high-dimensional analysis of $M$-estimators with decomposable regularizers},
  author={Negahban, Sahand N and Ravikumar, Pradeep and Wainwright, Martin J and Yu, Bin},
  journal={Statistical Science},
  volume={27},
  number={4},
  pages={538--557},
  year={2012}
}

@article{shah2016estimation,
  title={Estimation from pairwise comparisons: Sharp minimax bounds with topology dependence},
  author={Shah, Nihar B and Balakrishnan, Sivaraman and Bradley, Joseph and Parekh, Abhay and Ramchandran, Kannan and Wainwright, Martin J},
  journal={Journal of Machine Learning Research},
  volume={17},
  number={58},
  pages={1--47},
  year={2016}
}

@article{yu2022surprising,
  title={The surprising effectiveness of ppo in cooperative multi-agent games},
  author={Yu, Chao and Velu, Akash and Vinitsky, Eugene and Gao, Jiaxuan and Wang, Yu and Bayen, Alexandre and Wu, Yi},
  journal={Advances in neural information processing systems},
  volume={35},
  pages={24611--24624},
  year={2022}
}

@article{comanici2025gemini,
  title={Gemini 2.5: Pushing the frontier with advanced reasoning, multimodality, long context, and next generation agentic capabilities},
  author={Comanici, Gheorghe and Bieber, Eric and Schaekermann, Mike and Pasupat, Ice and Sachdeva, Noveen and Dhillon, Inderjit and Blistein, Marcel and Ram, Ori and Zhang, Dan and Rosen, Evan and others},
  journal={arXiv preprint arXiv:2507.06261},
  year={2025}
}

@inproceedings{ma2025vision,
  title={Vision-Based Generic Potential Function for Policy Alignment in Multi-Agent Reinforcement Learning},
  author={Ma, Hao and Wang, Shijie and Pu, Zhiqiang and Zhao, Siyao and Ai, Xiaolin},
  booktitle={Proceedings of the AAAI Conference on Artificial Intelligence},
  volume={39},
  number={18},
  pages={19287--19295},
  year={2025}
}

@inproceedings{wang2020shapley,
  title={Shapley Q-value: A local reward approach to solve global reward games},
  author={Wang, Jianhong and Zhang, Yuan and Kim, Tae-Kyun and Gu, Yunjie},
  booktitle={Proceedings of the AAAI conference on artificial intelligence},
  volume={34},
  number={05},
  pages={7285--7292},
  year={2020}
}

@misc{gpt51,
  author = {OpenAI},
  title = {Gpt-5 system card},
  year = {2025},
  howpublished = {\url{ https://cdn.openai.com/gpt-5-system-card.pdf}}
}

@article{zhang2021multi,
  title={Multi-agent reinforcement learning: A selective overview of theories and algorithms},
  author={Zhang, Kaiqing and Yang, Zhuoran and Ba{\c{s}}ar, Tamer},
  journal={Handbook of reinforcement learning and control},
  pages={321--384},
  year={2021},
  publisher={Springer}
}

@article{liu2023xuance,
  title={XuanCe: A comprehensive and unified deep reinforcement learning library},
  author={Liu, Wenzhang and Cai, Wenzhe and Jiang, Kun and Cheng, Guangran and Wang, Yuanda and Wang, Jiawei and Cao, Jingyu and Xu, Lele and Mu, Chaoxu and Sun, Changyin},
  journal={arXiv preprint arXiv:2312.16248},
  year={2023}
}

@article{wang2022shaq,
  title={Shaq: Incorporating shapley value theory into multi-agent q-learning},
  author={Wang, Jianhong and Zhang, Yuan and Gu, Yunjie and Kim, Tae-Kyun},
  journal={Advances in Neural Information Processing Systems},
  volume={35},
  pages={5941--5954},
  year={2022}
}

@article{terry2021pettingzoo,
  title={Pettingzoo: Gym for multi-agent reinforcement learning},
  author={Terry, J and Black, Benjamin and Grammel, Nathaniel and Jayakumar, Mario and Hari, Ananth and Sullivan, Ryan and Santos, Luis S and Dieffendahl, Clemens and Horsch, Caroline and Perez-Vicente, Rodrigo and others},
  journal={Advances in Neural Information Processing Systems},
  volume={34},
  pages={15032--15043},
  year={2021}
}

@article{carroll2019utility,
  title={On the utility of learning about humans for human-ai coordination},
  author={Carroll, Micah and Shah, Rohin and Ho, Mark K and Griffiths, Tom and Seshia, Sanjit and Abbeel, Pieter and Dragan, Anca},
  journal={Advances in neural information processing systems},
  volume={32},
  year={2019}
}

\newpage

\appendix

\section{Additional Details of MARS-Bench}
\label{appendix:mars-bench}

Each agent receives an egocentric pixel-based observation. The action space is discrete and consists of five actions: \textit{no-operation}, \textit{move forward}, \textit{move backward}, \textit{turn left}, and \textit{turn right}. The reward signal is defined under two settings: a sparse and a dense mode. In the sparse reward setting, the agent receives a reward of $+1$ upon task completion and a penalty of $-1$ for collisions or timeouts. In the dense reward setting, agents receive per-agent, step-wise shaping rewards that reflect progress toward task completion. The sparse reward mode is designed to closely reflect real-world conditions during training. In contrast, the dense reward mode leverages extensive environment-internal variables that would be unavailable in real-world settings, and is used solely for analysis and debugging purposes. Each episode has a maximum length of 2000 steps and terminates immediately upon either success or failure. Below we provide detailed descriptions of the three tasks:

\begin{itemize}[noitemsep, topsep=0pt, leftmargin=*]
    \item \textbf{Pass Gate}: This task involves 2 agents. It presents a bottleneck challenge in which a narrow gate divides the environment. Two agents starting on the same side must resolve contention for access to this shared passage, necessitating policies that can manage spatio-temporal conflicts.
    \item \textbf{Herd Sheep}: This task involves 3 agents. We introduce sheep as non-player characters whose autonomous behavior combines avoidance of agents, attraction to the flock's centroid, and stochastic wandering. The agents must collaborate to drive all sheep through the gate into the next room. This task tests the agents’ ability to understand the sheep’s movement dynamics and to cooperate in executing the herding task.
    \item \textbf{Collect Ball}: This task involves 4 agents. Agents must collect all balls in the room, aiming to maximize efficiency via parallel execution.
\end{itemize}

\section{Detailed Proofs of Theoretical Properties}
\label{appendix:proofs}

In this appendix, we provide the step-by-step mathematical derivations for the theoretical properties presented in Section \ref{sec:theory}.

\subsection{Proof of Proposition \ref{pro:robustness} (Convergence and Robustness)}

\textbf{Proposition Restatement:} \textit{Suppose the comparison graph is connected with algebraic connectivity $\lambda_2 > 0$. Let $\hat{\mathbf{c}_t}$ be the MLE estimate derived from $K$ pairwise comparisons. With probability at least $1 - n^{-2}$, the estimation error satisfies $\|\hat{\mathbf{c}_t} - \mathbf{c}_t^*\|_2 \le \mathcal{O}(\sqrt{\frac{n \log n}{K}})$.}

\begin{proof}
We analyze the error bound using the framework of M-estimation in convex optimization. The proof proceeds in three main steps: (1) establishing the strong convexity of the loss function, (2) bounding the gradient at the optimum, and (3) deriving the parameter error bound.

\paragraph{Step 1: Negative Log-Likelihood and Convexity.}
The negative log-likelihood (loss function) for the Bradley–Terry model with $K$ samples is:
\begin{equation}
\begin{split}
    \mathcal{L}(\mathbf{c}_t) = \sum_{k=1}^K & \left[ \log(1 + e^{c_{t,i_k} - c_{t, j_k}}) \right. \\
    & \left. - y_k(c_{t, i_k} - c_{t, j_k}) \right]
\end{split}
\end{equation}
where $y_k=1$ if $i_k \succ j_k$ and $0$ otherwise.
The Hessian matrix of the loss function, $\mathbf{H}(\mathbf{c}_t) = \nabla^2 \mathcal{L}(\mathbf{c}_t)$, corresponds to the Laplacian of the comparison graph weighted by the variances of the logistic distribution. For a connected graph, the smallest non-zero eigenvalue of the Laplacian, denoted as $\lambda_2$ (algebraic connectivity), is strictly positive.
Thus, restricted to the subspace orthogonal to the constant vector $\mathbf{1}$ (to handle translation invariance), the loss function is $\mu$-strongly convex locally around $\mathbf{c}_t^*$:
\begin{equation}
    \Delta^T \nabla^2 \mathcal{L}(\mathbf{c}_t) \Delta \ge \mu \|\Delta\|_2^2, \quad \forall \Delta \in \mathbb{R}^n, \Delta \perp \mathbf{1}
    \label{eq:strong_convexity}
\end{equation}
where $\mu$ scales linearly with the number of samples $K$ (assuming uniform sampling of pairs).

\paragraph{Step 2: Taylor Expansion.}
Since $\hat{\mathbf{c}_t}$ minimizes $\mathcal{L}(\mathbf{c}_t)$, the gradient $\nabla \mathcal{L}(\hat{\mathbf{c}_t}) = 0$. We verify the error $\Delta = \hat{\mathbf{c}_t} - \mathbf{c}_t^*$ using the first-order Taylor expansion of the gradient around $\mathbf{c}_t^*$:
\begin{equation}
    \nabla \mathcal{L}(\hat{\mathbf{c}_t}) \approx \nabla \mathcal{L}(\mathbf{c}_t^*) + \nabla^2 \mathcal{L}(\mathbf{c}_t^*) (\hat{\mathbf{c}_t} - \mathbf{c}_t^*)
\end{equation}
Setting $\nabla \mathcal{L}(\hat{\mathbf{c}_t}) = 0$ and multiplying by $\Delta^T$:
\begin{equation}
    0 \approx \Delta^T \nabla \mathcal{L}(\mathbf{c}_t^*) + \Delta^T \nabla^2 \mathcal{L}(\mathbf{c}_t^*) \Delta
\end{equation}
Using the strong convexity property (Eq. \ref{eq:strong_convexity}) and the Cauchy-Schwarz inequality:
\begin{equation}
\begin{split}
    \mu \|\Delta\|_2^2 \le \Delta^T \nabla^2 \mathcal{L}(\mathbf{c}_t^*) \Delta 
    & = - \Delta^T \nabla \mathcal{L}(\mathbf{c}_t^*) \\
    & \le \|\Delta\|_2 \|\nabla \mathcal{L}(\mathbf{c}_t^*)\|_2
\end{split}
\end{equation}
Dividing both sides by $\mu \|\Delta\|_2$:
\begin{equation}
    \|\Delta\|_2 \le \frac{1}{\mu} \|\nabla \mathcal{L}(\mathbf{c}_t^*)\|_2
    \label{eq:error_bound_raw}
\end{equation}

\paragraph{Step 3: Bounding the Gradient Norm (Concentration).}
The gradient at the true parameter $\mathbf{c}_t^*$ is given by:
\begin{equation}
    \nabla \mathcal{L}(\mathbf{c}_t^*) = \sum_{k=1}^K (P_{i_k j_k}^* - y_k) \mathbf{x}_k
\end{equation}
where $\mathbf{x}_k$ is the indicator vector for the pair $(i_k, j_k)$. Since $E[y_k] = P_{i_k j_k}^*$, the expected gradient is $\mathbb{E}[\nabla \mathcal{L}(\mathbf{c}_t^*)] = \mathbf{0}$.
The term $(P^* - y_k)$ is a bounded random variable in $[-1, 1]$. By Hoeffding's inequality (or Azuma-Hoeffding for martingales), the norm of the sum of these random variables is bounded with high probability. Specifically, for $n$ dimensions:
\begin{equation}
    \mathbb{P}\left( \|\nabla \mathcal{L}(\mathbf{c}_t^*)\|_2 \ge \tau \right) \le 2n \exp\left( - \frac{\tau^2}{C K} \right)
\end{equation}
Setting the probability to $n^{-2}$, we get the bound $\|\nabla \mathcal{L}(\mathbf{c}_t^*)\|_2 \le \mathcal{O}(\sqrt{K \log n})$.

\paragraph{Final Assembly.}
Substituting the gradient bound ($\sqrt{K}$) and the strong convexity parameter ($\mu \propto K$) into Eq. (\ref{eq:error_bound_raw}):
\begin{equation}
    \|\hat{\mathbf{c}_t} - \mathbf{c}_t^*\|_2 \le \frac{\mathcal{O}(\sqrt{K \log n})}{\mathcal{O}(K)} = \mathcal{O}\left( \sqrt{\frac{\log n}{K}} \right)
\end{equation}
Normalizing by dimension (as per the theorem statement) yields the stated result.
\end{proof}

\subsection{Proof of Proposition 2 (Alignment with Shapley Values)}

\textbf{Proposition Restatement:} \textit{If $\mathbf{c}_t^* = \boldsymbol{v}_t^*$, then $\hat{\mathbf{c}_t}$ derived by MLE are consistent estimators of the true Shapley values.}

\begin{proof}
This proof relies on the consistency of Maximum Likelihood Estimators and the Law of Large Numbers.

The derivative of the log-likelihood function with respect to the score $c_{t,i}$ is:
\begin{equation}
    \frac{\partial \ell}{\partial c_{t,i}} = \sum_{j \neq i} \left( n_{ij} - N_{ij} \frac{e^{c_{t,i}}}{e^{c_{t,i}} + e^{c_{t,j}}} \right)
\end{equation}
where $n_{ij}$ is the observed number of times $i$ beats $j$, and $N_{ij}$ is the total comparisons. The MLE solution $\hat{\mathbf{c}_t}$ satisfies $\frac{\partial \ell}{\partial c_{t,i}} = 0$ for all $i$.

Dividing the equation by the total number of samples $K$ and taking the limit as $K \to \infty$:
\begin{equation}
    \lim_{K \to \infty} \frac{n_{ij}}{N_{ij}} = \mathbb{P}(i \succ j \mid \mathbf{c}_t^*) = \frac{e^{c^*_{t,i}}}{e^{c^*_{t,i}} + e^{c^*_{t,j}}}
\end{equation}
The optimality condition for the estimator $\hat{\mathbf{c}_t}$ in the limit becomes:
\begin{equation}
    \sum_{j \neq i} \pi_{ij} \left( \frac{e^{c^*_{t,i}}}{e^{c^*_{t,i}} + e^{c^*_{t,j}}} - \frac{e^{\hat{c}_{t,i}}}{e^{\hat{c}_{t,i}} + e^{\hat{c}_{t,j}}} \right) = 0
\end{equation}
where $\pi_{ij}$ is the sampling probability of pair $(i, j)$.
Since the logistic function $\sigma(x) = \frac{1}{1+e^{-x}}$ is strictly monotonic, the only solution to this system of equations (assuming a connected graph) implies:
\begin{equation}
    \hat{c}_{t,i} - \hat{c}_{t,j} = c^*_{t,i} - c^*_{t,j}, \quad \forall i, j
\end{equation}
Thus, $\hat{\mathbf{c}_t} = \mathbf{c}_t^* + C$. Since we assume $\mathbf{c}_t^* = \boldsymbol{v}^*$, the estimated scores converge to the Shapley values.
\end{proof}

\section{Experimental Details}
\label{appendix:exp_details}

We employ action repeat during training, where agents make decisions once every 32 environment steps. From each action-repeat window, we uniformly sample four visual frames, convert them to grayscale, and stack them to form a $128 \times 128 \times 4$ observation space for each agent. All training models in our experiments adopt a unified multi-layer CNN-based architecture, as illustrated in Figure~\ref{fig:model}. MAPPO, QMIX, and COMA are implemented using the XuanCe \cite{liu2023xuance} framework, while the implementations of SQDDPG, SAMA, and V-GEPF are obtained from their respective open-source code repositories. All MAPPO-based methods, including MARS-RA, MAPPO, LMM Score, SAMA, and V-GEPF, share the same MAPPO hyperparameters, as reported in Table~\ref{tab:ppo_hparams}. The hyperparameters for QMIX and COMA follow those used in \citet{papoudakis2020benchmarking}, the hyperparameters for SQDDPG follow those used in \citet{wang2022shaq}. For a fair comparison, SAMA and V-GEPF employ the same LMM as MARS-RA (Gemini-2.5-Pro), and all other configurations follow the original implementations.

All experiments were conducted on a desktop computer running Ubuntu 24.04, equipped with an Intel(R) Core(TM) i7-14700K CPU (20 cores), an NVIDIA GeForce RTX 3090 GPU with 24 GB of VRAM, and 128 GB of RAM. For GPT-5.1 and Gemini-2.5-Pro, we use the official API services provided on their respective websites. In contrast, Qwen3-VL (2B, 4B, and 8B) is deployed locally. To accelerate LMM-based pairwise comparisons during training, we deploy Qwen3-VL across 4 desktop machines, each equipped with an NVIDIA GeForce RTX 3090 GPU. Under our experimental setup and computational resources, training MAPPO for 50 million environment steps completes within 16 hours, while training MARS-RA for 50 million steps completes within 30 hours.

Moreover, we conduct an additional hyperparameter sweep over $\rho$ on MARS-Bench to investigate how different $\rho$ values affect the performance of MARS-RA. The results are reported in Table~\ref{tab:rho_ablation}, where each value denotes the success rate achieved by the final trained model on the corresponding MARS-Bench task. The hyperparameter sweep results indicate that MARS-RA achieves its best performance when $\rho \in \{0.1, 0.5, 1.0\}$. When $\rho$ takes a smaller value (e.g., 0.03) or a larger value (e.g., 5), the performance of MARS-RA declines and the standard error increases. When $\rho$ = 0, MARS-RA reduces to MAPPO, and its performance is consistent with that of MAPPO.

\begin{table}[ht]
  \centering
  \begin{tabular}{@{}lll@{}}
    \toprule
    \textbf{Hyperparameter}                       &  \textbf{MAPPO}  \\
    \midrule
    Optimizer                                 &  Adam \\
    Learning rate                             &  0.001             \\
    Training Batch Size                    &  10240                    \\
    Minibatch Size                        &    512                        \\
    Discount factor ($\gamma$)                &  0.99                   \\
    GAE ($\lambda$)                            &  0.95                   \\
    Policy clip ratio                         &  0.20                    \\
    Epochs                  &  10                      \\
    \bottomrule
  \end{tabular}
  \caption{Key hyperparameters for MAPPO in our experiments. }
  \label{tab:ppo_hparams}
\end{table}

\begin{figure}
    \centering
    \includegraphics[width=\columnwidth]{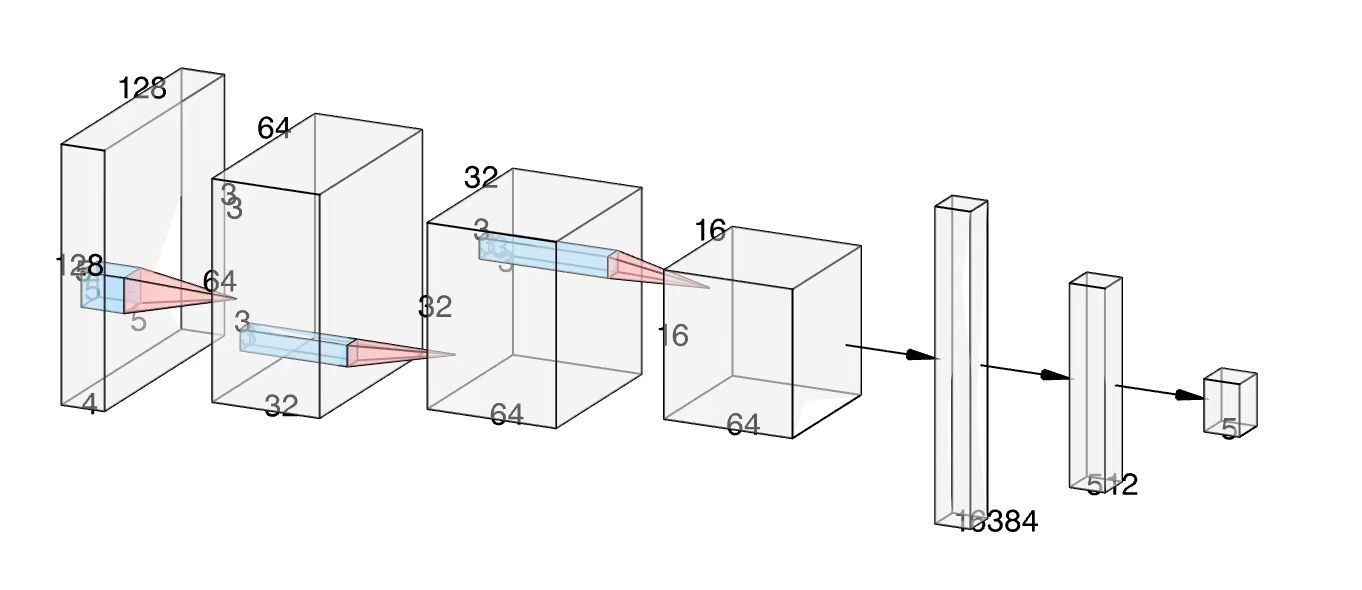}
    \caption{Model architectures used by all MARL algorithms in this study, a CNN followed by MLP.}
    \label{fig:model}
\end{figure}

\begin{table}[t]
\setlength{\tabcolsep}{1.0mm}
\centering

\begin{tabular}{lccc}
\toprule
$\rho$ & Pass Gate & Herd Sheep & Collect Ball \\
\midrule
$\rho = 0$    & 0.02 $\pm$ 0.00 & 0.06 $\pm$ 0.02 & 0.03 $\pm$ 0.01 \\
$\rho = 0.03$ & 0.35 $\pm$ 0.03 & 0.52 $\pm$ 0.04 & 0.33 $\pm$ 0.06 \\
$\rho = 0.1$  & 0.50 $\pm$ 0.03 & 0.66 $\pm$ 0.02 & 0.44 $\pm$ 0.03 \\
$\rho = 0.5$  & 0.53 $\pm$ 0.02 & 0.66 $\pm$ 0.01 & 0.47 $\pm$ 0.04 \\
$\rho = 1.0$  & 0.52 $\pm$ 0.01 & 0.68 $\pm$ 0.03 & 0.46 $\pm$ 0.02 \\
$\rho = 5.0$  & 0.40 $\pm$ 0.07 & 0.51 $\pm$ 0.05 & 0.39 $\pm$ 0.04 \\
\bottomrule
\end{tabular}
\caption{Effect of different $\rho$ values on performance across levels in MARS-Bench.}
\label{tab:rho_ablation}
\end{table}

\section{Experiments on Overcooked and Pistonball}
\label{appendix:more_env}

We evaluate MARS-RA and the selected baselines on the widely used MARL environments Overcooked \cite{carroll2019utility} and Pistonball \cite{terry2021pettingzoo}. This evaluation aims to verify that MARS-RA can guide agents to learn effective cooperative policies not only in embodied AI scenarios, but also in classic MARL tasks. We retain the original observation spaces, discrete action spaces, and reward functions provided by these environments, without introducing dynamically varying numbers of active agents or using action repeat. In both environments, models are trained for 1 million environment steps. All other experimental settings follow those in Section~\ref{sec:experiment} and Appendix~\ref{appendix:exp_details}.

\subsection{Overcooked}
In this kitchen cooking game, two agents must collaboratively prepare and deliver soup to obtain a shared team reward, as illustrated in Figure~\ref{fig:overcooked}. In this environment, agents receive a team reward of 20 upon each successful soup delivery, and the episode return is used as the evaluation metric. It includes five tasks:

\begin{itemize}
  \item \textbf{Cramped Room}: The setting is a restrictive room where two agents must share a single pot and serving point. The challenge encourages the agents to fully exploit the limited resources to cook and serve soup, achievable through simple cooperative strategies.
  \item \textbf{Asymmetric Advantages}: Players operate in two separate, isolated kitchens with an asymmetric layout. On the left side, onions are far from the pots, but serving points are centrally located. On the right side, the setup is reversed: onions are near the center, while serving points are far away.
  \item \textbf{Coordination Ring}: This circular design forces players to keep moving to avoid collisions, especially at the choke points in the top-right and bottom-left corners housing the ingredients and pots. Success depends on the simultaneous use of both pots.
  \item \textbf{Forced Coordination}: This layout creates a dependency loop by isolating the agents. Since the left side lacks cooking facilities and the right side lacks ingredients, the pair must verify their actions are synchronized. The workflow dictates that the left player prepares ingredients and plates, which allows the right player to complete the cooking and serving.
  \item \textbf{Counter Circuit}: This larger ring-shaped map places pots, ingredients, and serving stations on four different sides. Narrow paths make blocking frequent and teamwork difficult. A key strategy for success is placing onions on the central counters to allow for quick hand-offs between players.
\end{itemize}


\begin{figure*}
    \centering
    \includegraphics[width=1.0\textwidth]{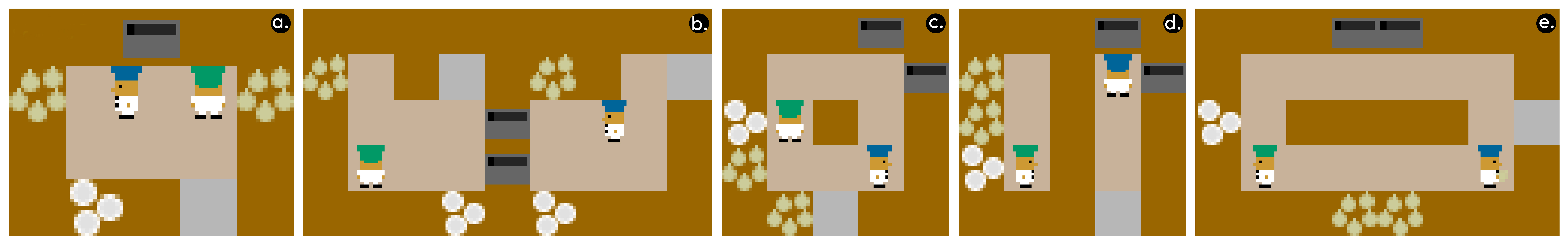}
    \caption{Screenshots of five tasks in the Overcooked environment: (a) Cramped Room, (b) Asymmetric Advantages, (c) Coordination Ring, (d) Forced Coordination, and (e) Counter Circuit.}
    \label{fig:overcooked}
\end{figure*}

\subsection{Pistonball}

In this physics-based cooperative environment, agents control vertically actuating pistons to propel a ball toward the left boundary, as illustrated in Figure~\ref{fig:positionball}. Developing an optimal policy for this task requires the agents to learn and execute highly coordinated joint behaviors. The environment contains 10 pistons, each of which is controlled by an individual agent. All agents receive a shared global reward at each timestep, consisting of a movement-based term and a fixed time penalty. The movement reward is calculated as the net displacement of the ball towards the left, normalized as a percentage of the total initial distance to the wall. Conversely, rightward movement incurs a negative reward. We evaluate performance using the episode return as the metric.


\begin{figure}
    \centering
    \includegraphics[width=\columnwidth]{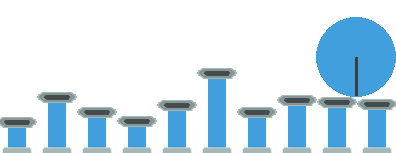}
    \caption{Screenshot of the Pistonball environment.}
    \label{fig:positionball}
\end{figure}

 \section{Experimental Details on LMM-Based Pairwise Comparisons}
\label{appendix:acc}

\begin{figure*}[htbp]
\centering
\begin{subfigure}{0.32\linewidth}
  \centering
  \includegraphics[width=\linewidth]{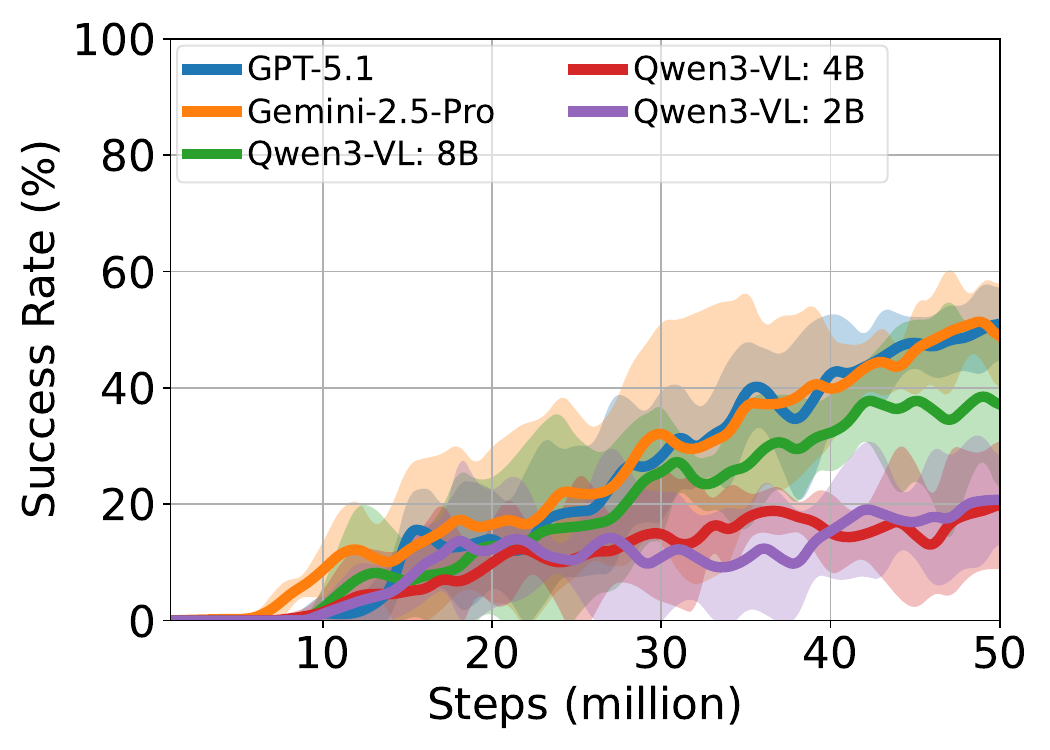}
  \caption{Pass Gate 1 query}
  \label{fig:pg_count1}
\end{subfigure}
\hfill
\begin{subfigure}{0.32\linewidth}
  \centering
  \includegraphics[width=\linewidth]{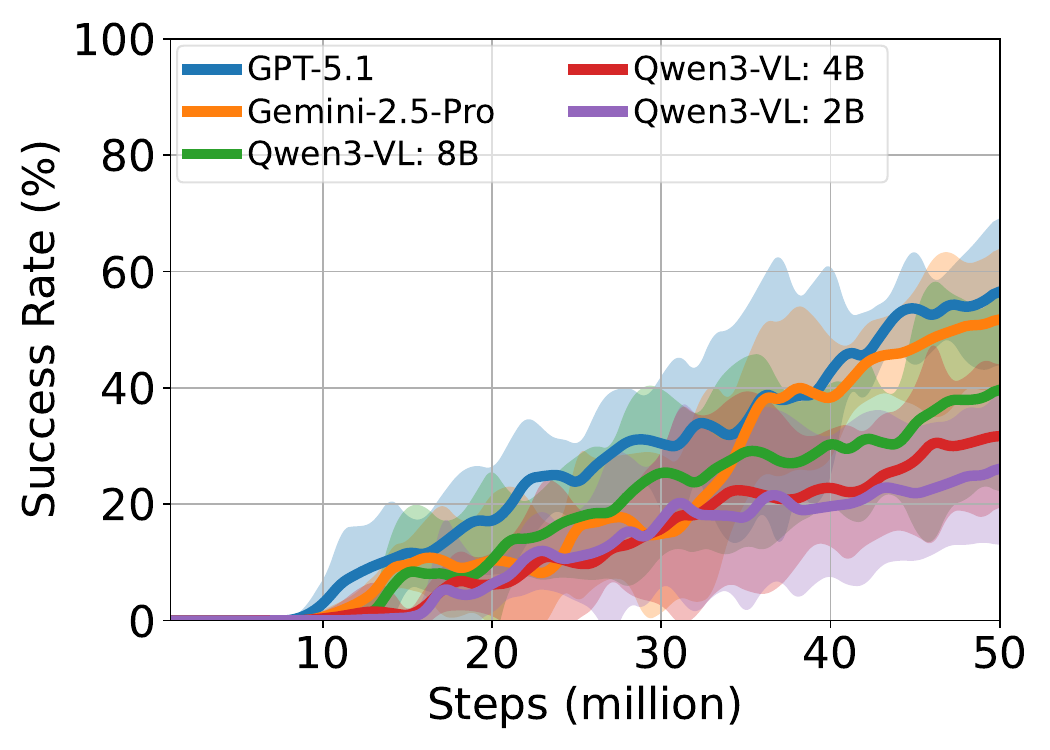}
  \caption{Pass Gate 4 queries}
  \label{fig:pg_count4}
\end{subfigure}
\hfill
\begin{subfigure}{0.32\linewidth}
  \centering
  \includegraphics[width=\linewidth]{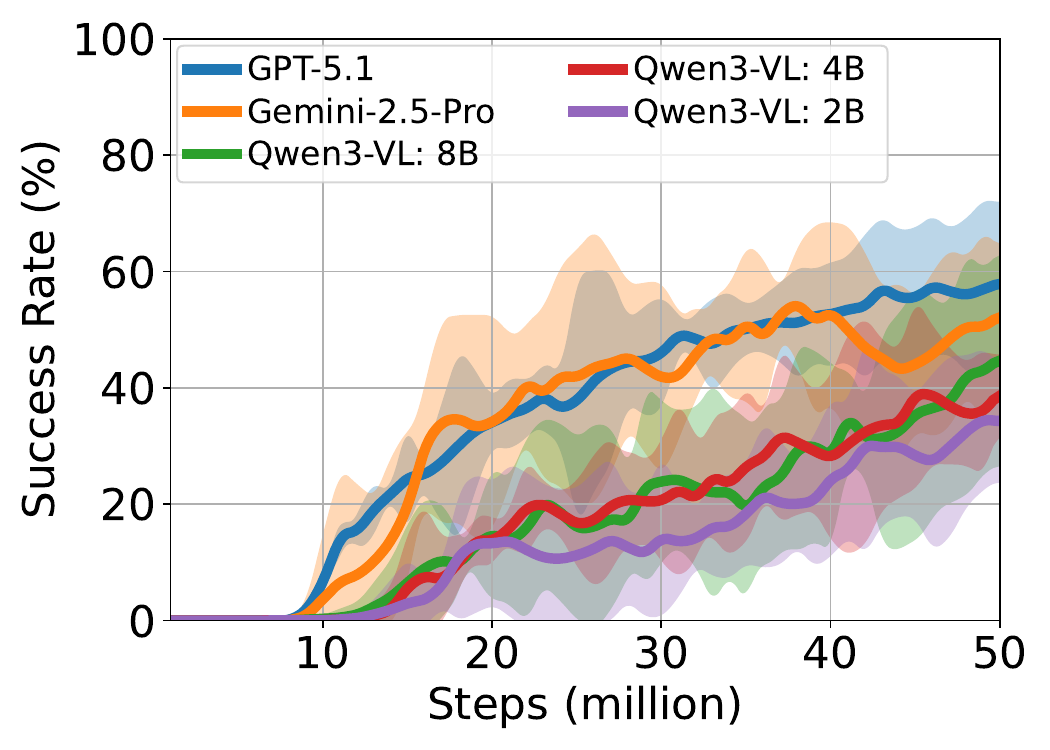}
  \caption{Pass Gate 8 queries}
  \label{fig:pg_count8}
\end{subfigure}
\hfill
\begin{subfigure}{0.32\linewidth}
  \centering
  \includegraphics[width=\linewidth]{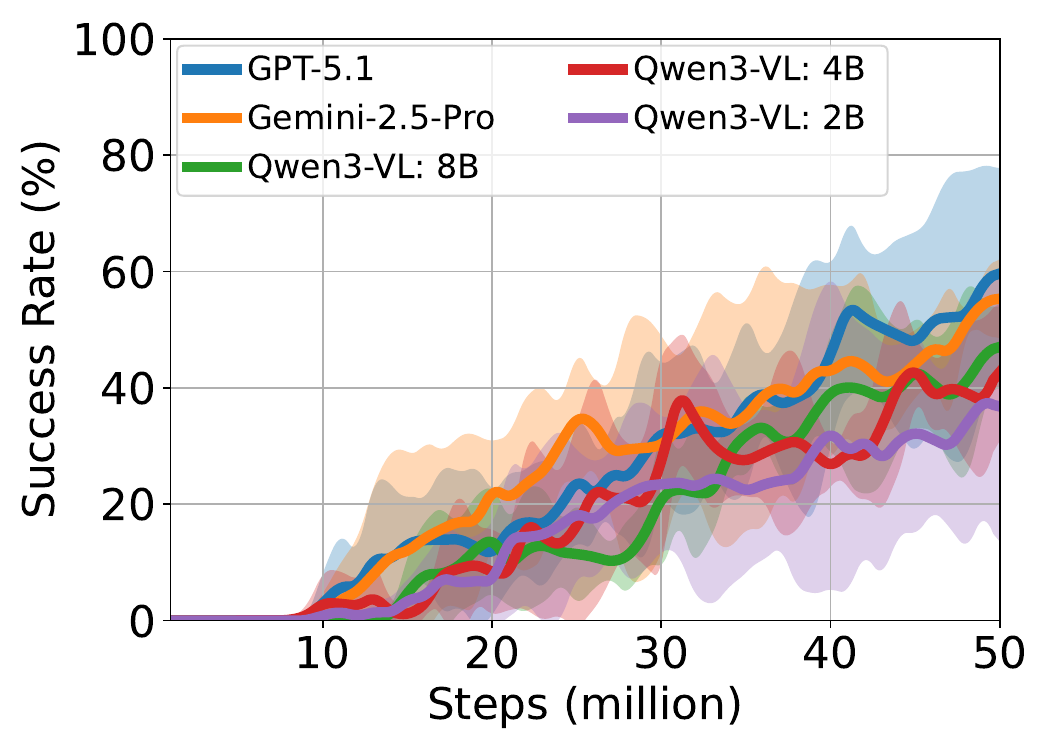}
  \caption{Pass Gate 16 queries}
  \label{fig:pg_count16}
\end{subfigure}
\hfill
\begin{subfigure}{0.32\linewidth}
  \centering
  \includegraphics[width=\linewidth]{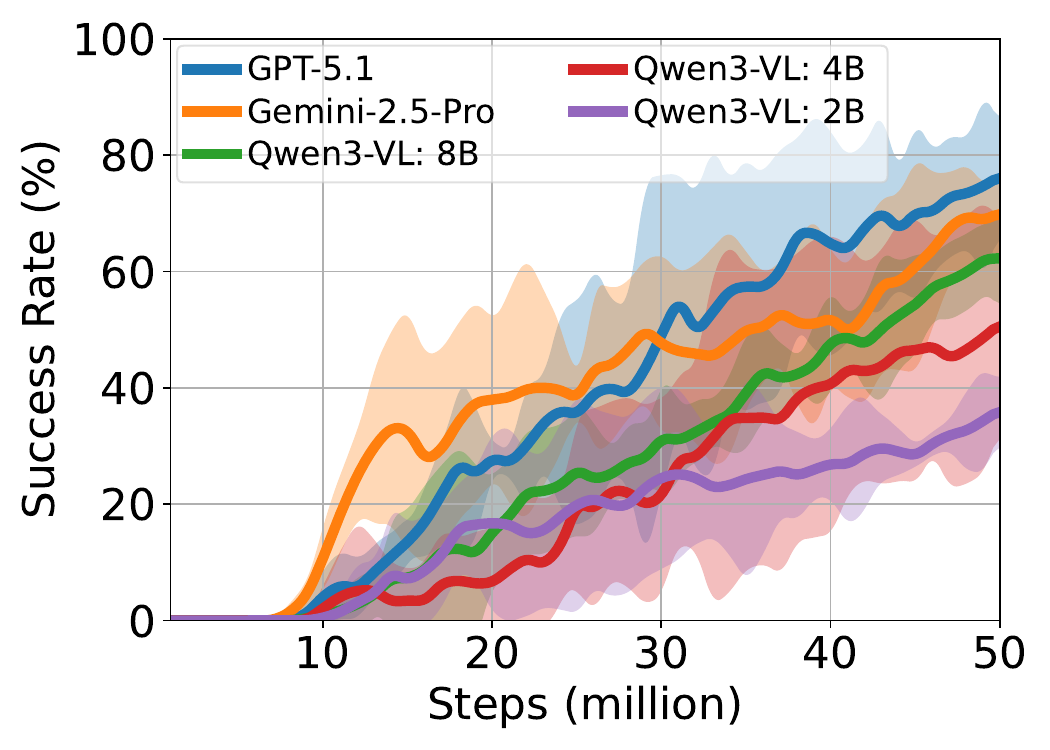}
  \caption{Herd Sheep 1 query}
  \label{fig:hs_count1}
\end{subfigure}
\hfill
\begin{subfigure}{0.32\linewidth}
  \centering
  \includegraphics[width=\linewidth]{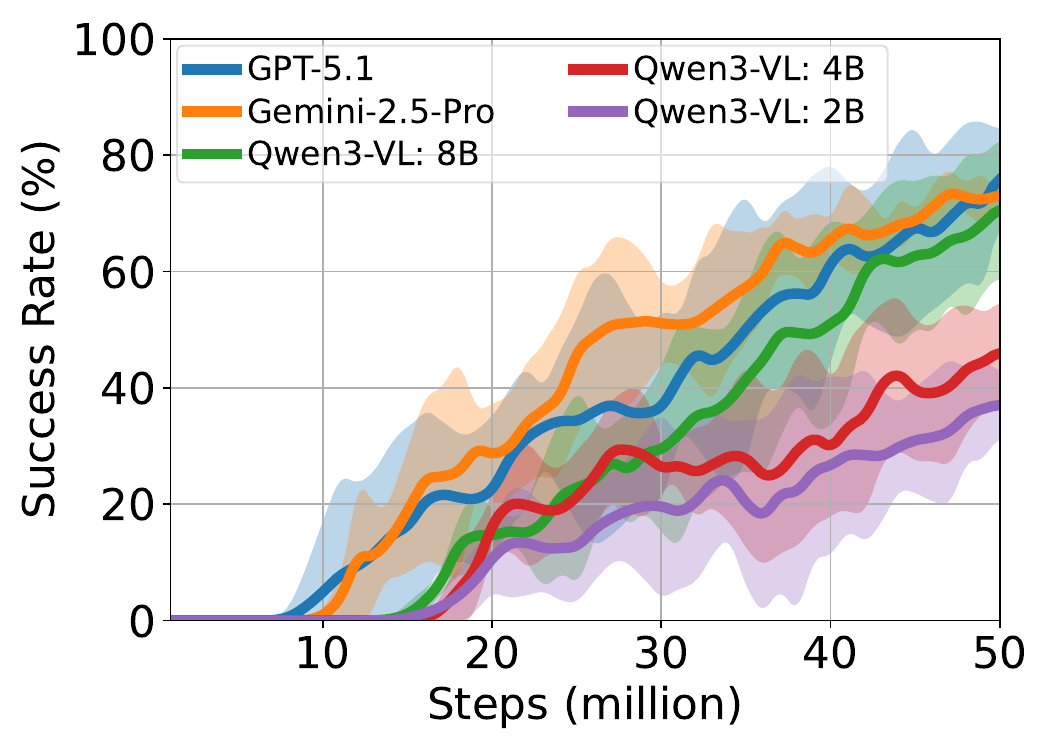}
  \caption{Herd Sheep 4 queries}
  \label{fig:hs_count4}
\end{subfigure}
\hfill
\begin{subfigure}{0.32\linewidth}
  \centering
  \includegraphics[width=\linewidth]{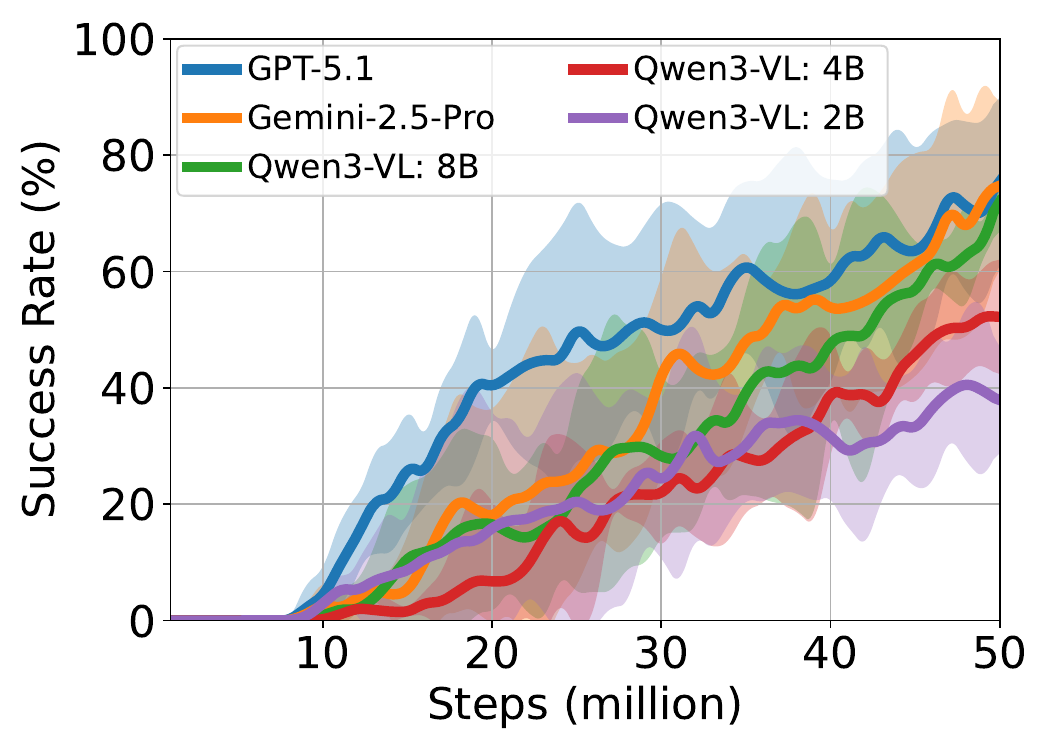}
  \caption{Herd Sheep 8 queries}
  \label{fig:hs_count8}
\end{subfigure}
\hfill
\begin{subfigure}{0.32\linewidth}
  \centering
  \includegraphics[width=\linewidth]{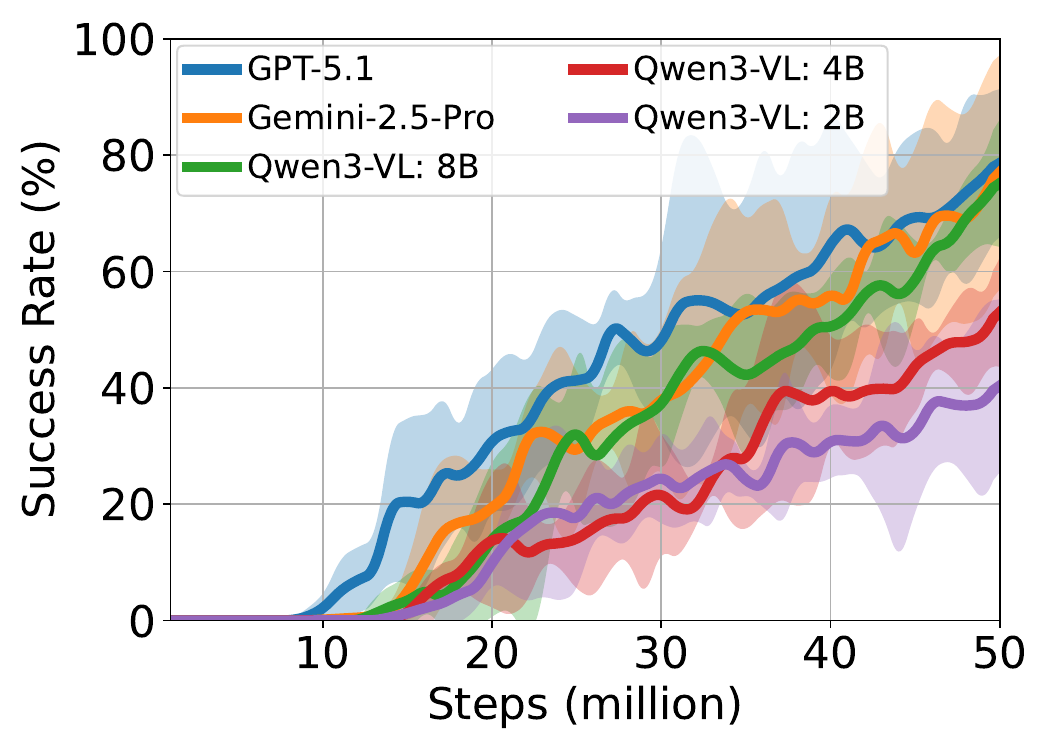}
  \caption{Herd Sheep 16 queries}
  \label{fig:hs_count16}
\end{subfigure}
\hfill
\begin{subfigure}{0.32\linewidth}
  \centering
  \includegraphics[width=\linewidth]{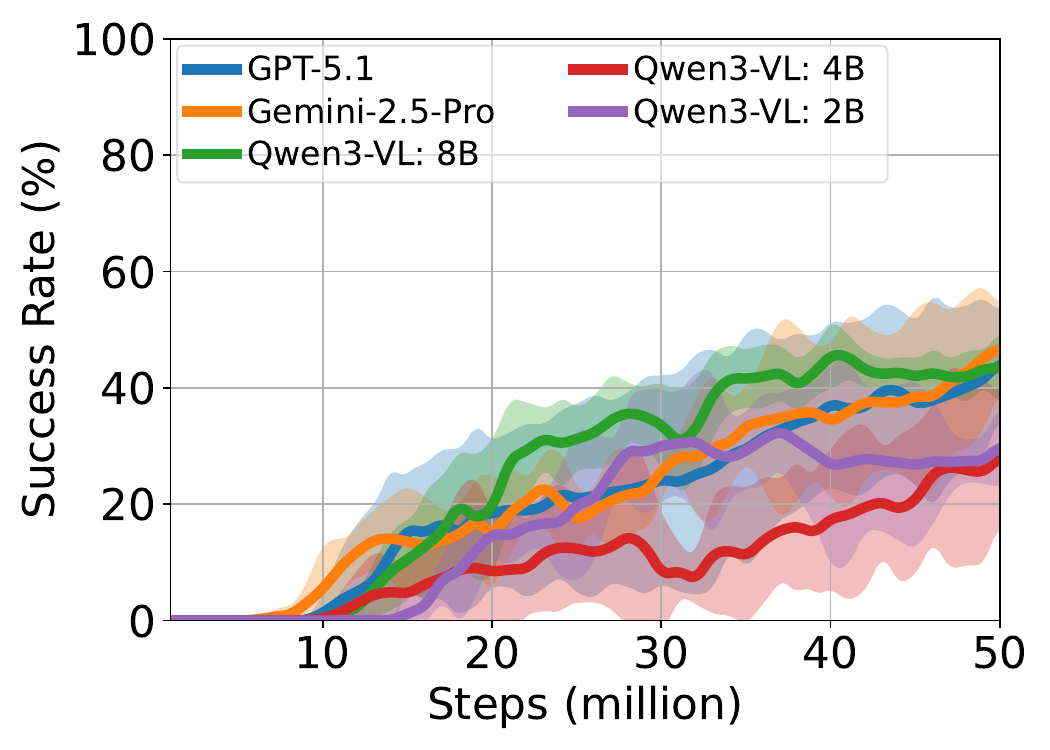}
  \caption{Collect Ball 1 query}
  \label{fig:hs_count1}
\end{subfigure}
\hfill
\begin{subfigure}{0.32\linewidth}
  \centering
  \includegraphics[width=\linewidth]{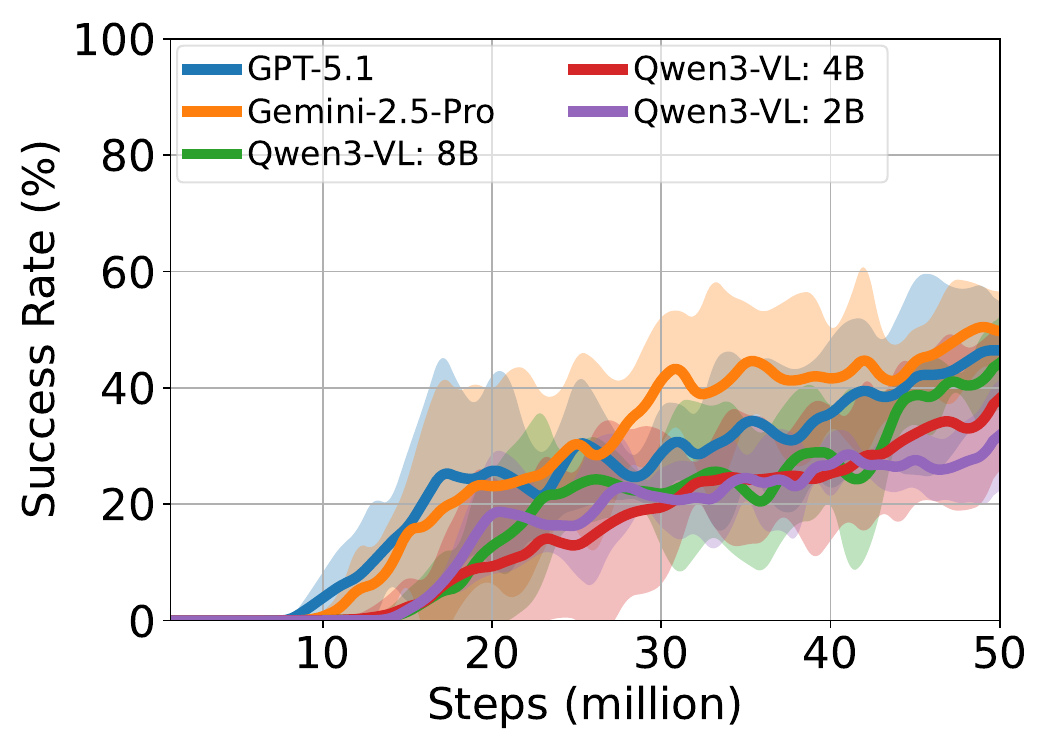}
  \caption{Collect Ball 4 queries}
  \label{fig:hs_count4}
\end{subfigure}
\hfill
\begin{subfigure}{0.32\linewidth}
  \centering
  \includegraphics[width=\linewidth]{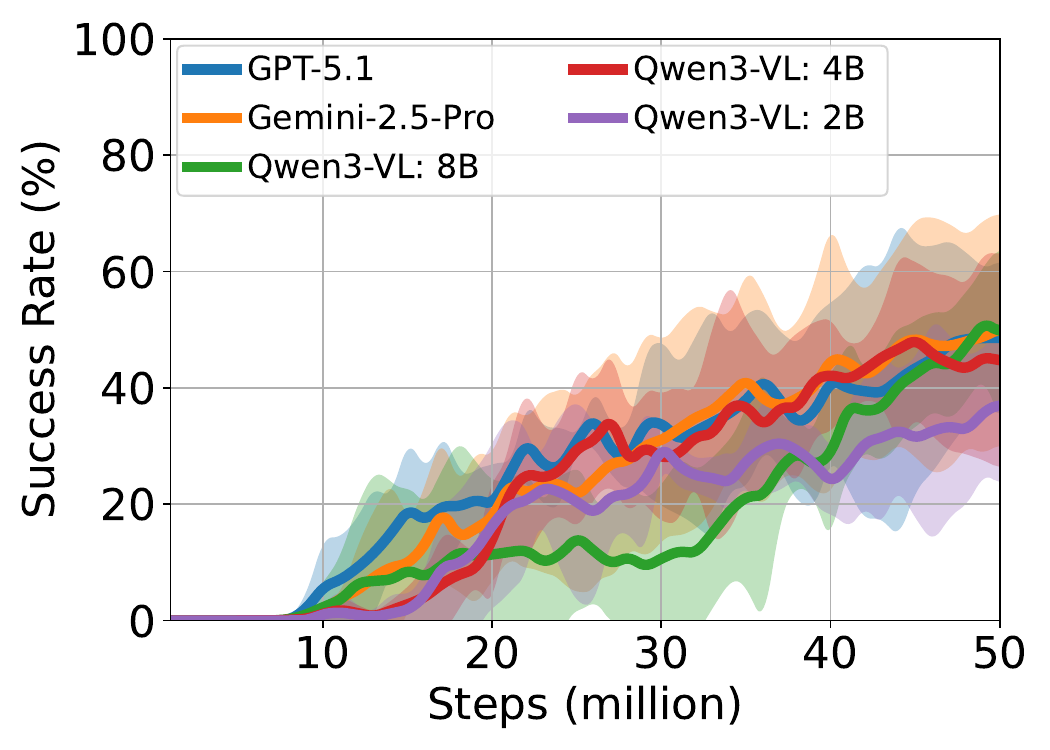}
  \caption{Collect Ball 8 queries}
  \label{fig:hs_count8}
\end{subfigure}
\hfill
\begin{subfigure}{0.32\linewidth}
  \centering
  \includegraphics[width=\linewidth]{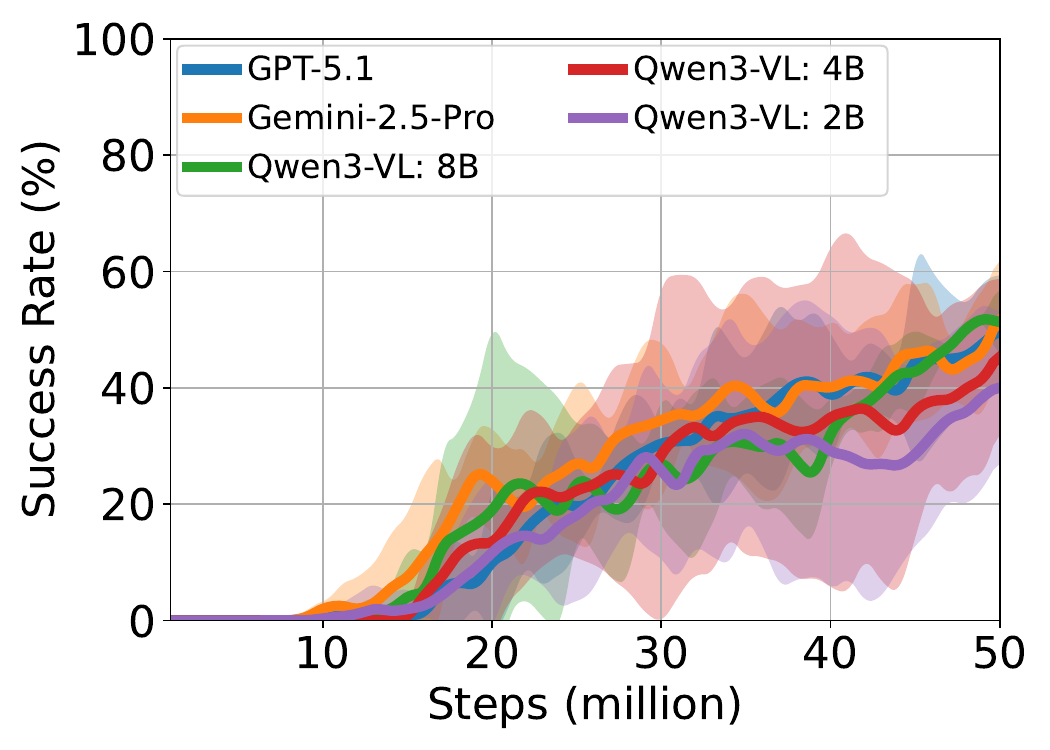}
  \caption{Collect Ball 16 queries}
  \label{fig:hs_count16}
\end{subfigure}
\caption{Training Curves of MARS-RA with Different LMMs and Numbers of Queries for Pairwise Comparisons. Results are averaged over five random seeds, and error bars indicate 95\% confidence intervals.}
\label{fig:different_count}
\end{figure*}


\begin{table*}[t]
\centering

\begin{tabular}{lllll}
\toprule
LMMs & Pass Gate & Herd Sheep & Collect Ball & LMM Average \\
\midrule
GPT-5.1  &     $0.65 \pm0.09$ &    $0.83 \pm0.06$  &      $0.68 \pm0.04$  &  $0.72 \pm0.06$ \\
Gemini-2.5-Pro  &  $0.63 \pm0.04$    &   $0.77 \pm0.10$   &      $0.70 \pm0.03$ &  $0.70 \pm0.04$   \\
Qwen3-VL: 8B  &   $0.48 \pm0.04$   &   $0.62 \pm0.08$   &    $0.46 \pm0.15$  &   $0.52 \pm0.05$   \\
Qwen3-VL: 4B  &   $0.42 \pm0.11$   &  $0.54 \pm0.02$    &     $0.40 \pm0.06$ &   $0.45 \pm0.04$   \\
Qwen3-VL: 2B  &    $0.37 \pm0.18$  &    $0.49 \pm0.13$  &    $0.42 \pm0.10$  &   $0.43 \pm0.03$   \\

\midrule
Task Average   &    $0.51 \pm0.06$  &    $0.65 \pm0.06$  &    $0.53 \pm0.07$  &   $0.56 \pm0.06$   \\
\bottomrule
\end{tabular}
\caption{Pairwise comparison accuracy of different LMMs on the three MARS-Bench tasks, with standard error.}
\label{tab:acc}
\end{table*}

\begin{figure*}[t]
  \centering
  \begin{subfigure}{0.45\linewidth}
    \centering
    \includegraphics[width=\linewidth]{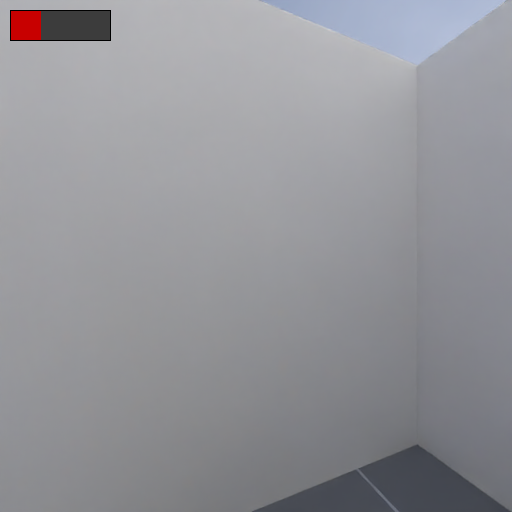}
    \caption{Limited visual observations.}
  \end{subfigure}
  \hfill
  \begin{subfigure}{0.45\linewidth}
    \centering
    \includegraphics[width=\linewidth]{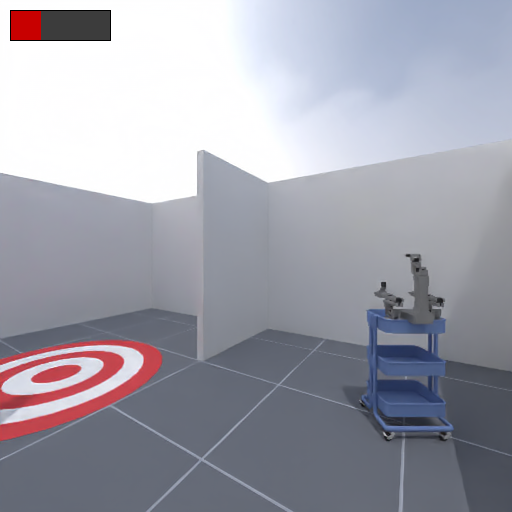}
    \caption{Informative visual observations.}
  \end{subfigure}
  \caption{Pairwise comparison errors by LMMs mainly occur when both agents’ egocentric observations are limited, as shown in the left image. In contrast, correct judgments can be made as long as at least one agent provides an informative observation with rich visual cues, as shown in the right image.}
  \label{fig:two_case}
\end{figure*}

\subsection{Setup} 

We train MARS-RA on the three MARS-Bench tasks using Gemini-2.5-Pro, GPT-5.1, and Qwen3-VL (2B, 4B, and 8B) under varying numbers of pairwise comparison queries, and concurrently measure the accuracy of LMM-based pairwise comparisons. The accuracy of LMM-based pairwise comparisons is computed by measuring the agreement between LMM judgments generated during training and the corresponding per-agent dense reward signals of each task, and then averaging across comparisons. All other experimental settings follow Section~\ref{sec:experiment} and Appendix~\ref{appendix:exp_details}.

\subsection{Results} Figure~\ref{fig:different_count} shows the training curves of different LMMs under varying numbers of pairwise comparison queries. We observe that both employing LMMs with higher pairwise comparison accuracy and increasing the number of pairwise comparison queries lead to improved MARS-RA performance. However, when using high-accuracy LMMs, the performance gains from increasing the query count are less pronounced compared to those achieved with lower-accuracy LMMs. These results suggest a trade-off and complementary relationship between LMM pairwise comparison accuracy and the number of comparison queries.  Table~\ref{tab:acc} reports the accuracy of LMM-based pairwise comparisons. We observe that commercial models generally outperform open-source models, and models with larger parameter scales tend to achieve higher accuracy. However, the accuracy of all models still leaves room for improvement. Figure~\ref{fig:two_case} illustrates typical cases observed during LMM-based pairwise comparisons. When an agent’s egocentric observation is severely limited and lacks sufficient visual information (e.g., teammate positions or goal locations), LMMs can still make correct judgments as long as at least one agent’s observation contains informative visual cues. In contrast, pairwise comparison errors most frequently occur when all agents face a wall and lack informative visual reference cues.

\section{Real-World Validation Details}
\label{appendix:realworld}

\begin{figure}
    \centering
    \includegraphics[width=\columnwidth]{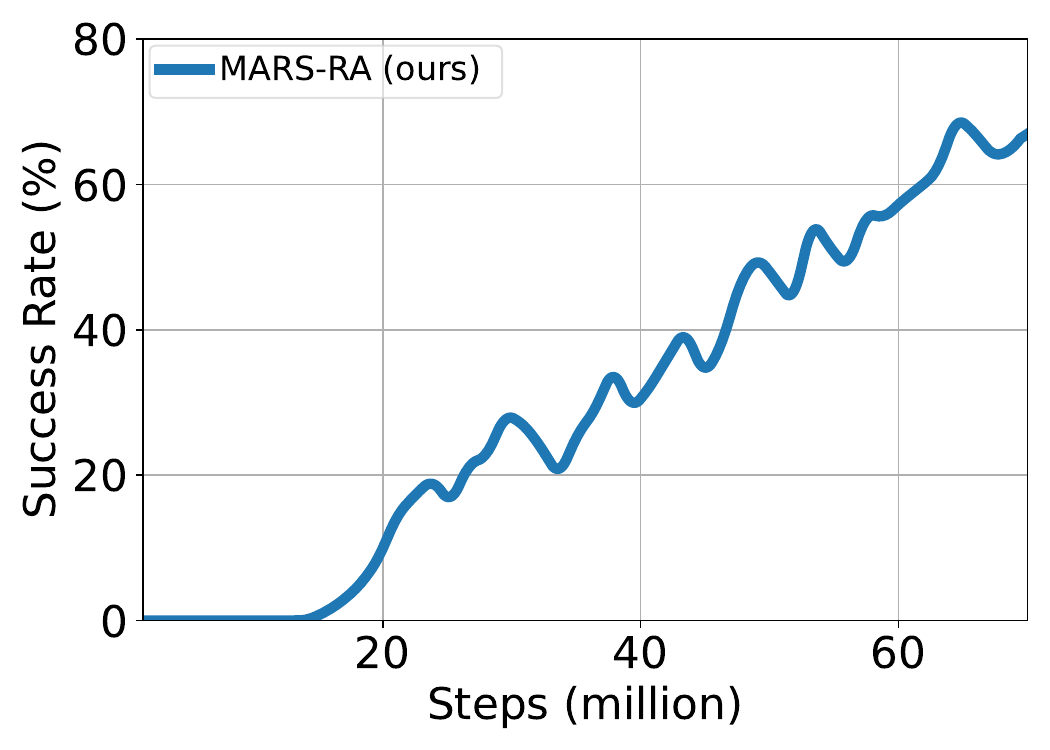}
    \caption{The learning curve obtained during training in the simulated environment for the real-world validation.}
    \label{fig:real_line}
\end{figure}

\begin{figure*}[htbp]
\centering
\begin{subfigure}{0.32\linewidth}
  \centering
  \includegraphics[width=\linewidth]{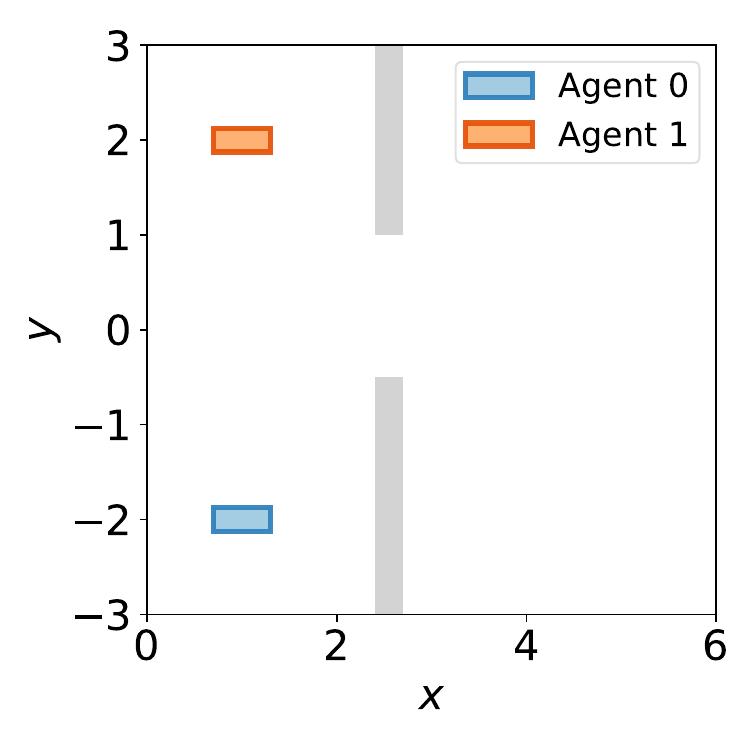}
  \caption{Agent positions at t = 0.0 s.}
  \label{fig:no1}
\end{subfigure}
\hfill
\begin{subfigure}{0.32\linewidth}
  \centering
  \includegraphics[width=\linewidth]{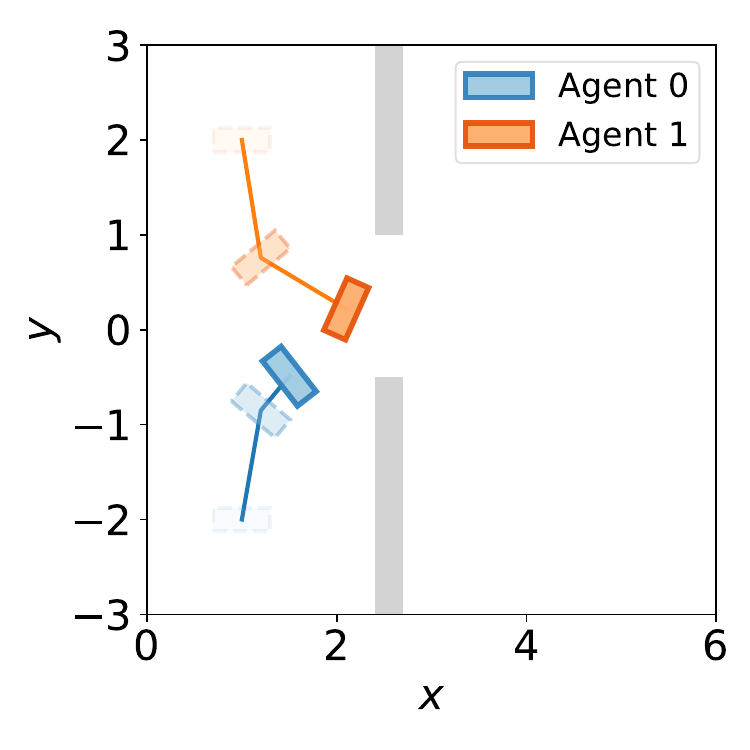}
  \caption{Agent positions at t = 16.6 s.}
  \label{fig:no2}
\end{subfigure}
\hfill
\begin{subfigure}{0.32\linewidth}
  \centering
  \includegraphics[width=\linewidth]{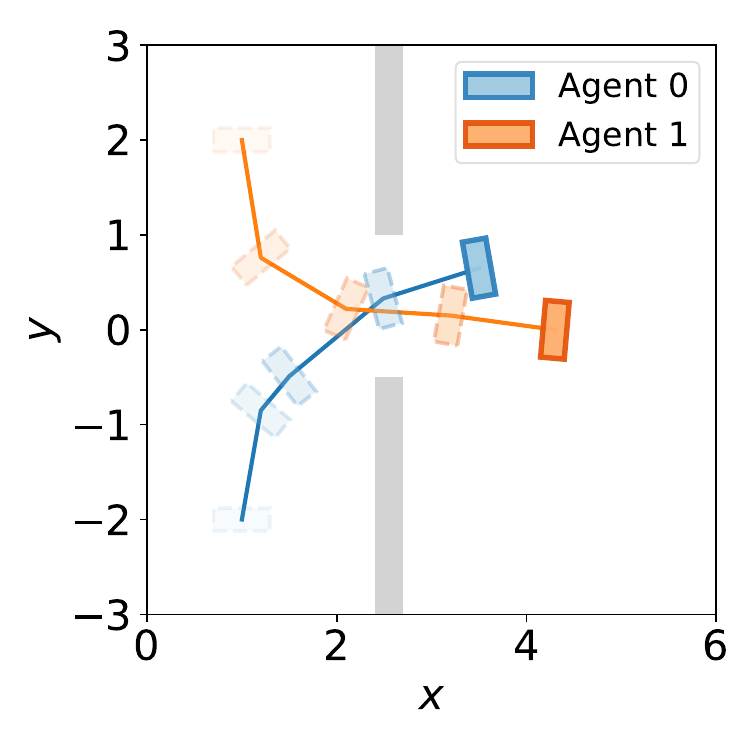}
  \caption{Agent positions at t = 31.7 s.}
  \label{fig:no3}
\end{subfigure}
\caption{A representative successful real-world trajectory of XLeRobots in the Pass Gate task, where a policy trained with MARS-RA enables coordinated, sequential gate traversal without collisions. Blue and yellow rectangles represent Agent~0 and 1, respectively; opaque rectangles indicate current positions, and transparent rectangles denote historical positions.}
\label{fig:trajectory}
\end{figure*}

\subsection{Task Setup}
We conduct a real-world validation of MARS-RA, as illustrated in Figure~\ref{fig:real_world}. We deploy two XLeRobot robots to perform the Pass Gate task in a real-world indoor environment. The entire room is first captured via 3D scanning and reconstructed as a virtual 3D environment, which is then instantiated as a task in MARS-Bench. MARS-RA is trained in simulation for 70 million environment steps, using Qwen3-VL (8B) to generate pairwise comparisons, with 16 comparisons per decision. We additionally measure the pairwise comparison accuracy of Qwen3-VL (8B) on the Pass Gate task during training. The success rate is used as the evaluation metric for this experiment. The success criterion in simulation is consistent with that described in Section~\ref{sec:experiment}. In the real-world setting, success is defined as both agents passing through the gate into the adjacent room within two minutes without collisions; otherwise, the trial is considered a failure. All other settings follow Section~\ref{sec:experiment}, Appendix~\ref{appendix:exp_details} and Appendix~\ref{appendix:acc}.

\subsection{Results}
The training results in simulation are shown in Figure~\ref{fig:real_line}. On the Pass Gate task, the success rate exceeds 67\% after training. The pairwise comparison accuracy of Qwen3-VL (8B) is 71\%. This result preliminarily suggests the potential of MARS-RA to guide agents toward effective cooperative policies in scenarios with real-world–level complexity. 

We deploy the trained policy in the real world and conduct 25 trials, of which 16 are successful, resulting in a success rate of 64\%. Notably, LMM-based pairwise comparisons are only used during training in MARS-RA, and are not required at deployment time. Figure~\ref{fig:trajectory} shows a representative successful trajectory of the trained policy in a real-world environment. Both agents initially move toward the gate simultaneously. The yellow agent passes through the gate first, while the blue agent maintains a safe distance and waits. After the yellow agent clears the gate and moves forward to create sufficient space, it comes to a stop, allowing the blue agent to pass through the gate once a safe clearance is available.

\end{document}